\documentclass{article}

\usepackage{times,authblk}
\usepackage{fullpage}

\usepackage{geometry}
\usepackage{hyperref,url}
\usepackage{algorithm,algorithmic}
\usepackage{subfigure,multirow,graphicx,color,rotating}
\usepackage{amsthm,amsmath,amssymb,amsfonts}
\usepackage{enumerate}
\usepackage{verbatim,bbold}
\usepackage{mkolar_definitions}

\newcount\Comments
\Comments=1
\definecolor{teal}{rgb}{0.3,0.8,0.8}
\newcommand{\kibitz}[2]{\ifnum\Comments=1\textcolor{#1}{#2}\fi}

\newcommand{\version}{arxiv}
\ifthenelse{\equal{\version}{arxiv}}{

}{

}

\newenvironment{packed_enum}{
\begin{enumerate}
\setlength{\itemsep}{1pt}
\setlength{\parskip}{0pt}
\setlength{\parsep}{0pt}
}{\end{enumerate}}

\begin{document}
\title{Minimax Structured Normal Means Inference}

\author[1]{Akshay Krishnamurthy
\thanks{akshaykr@cs.cmu.edu}}

\affil[1]{Microsoft Research\\
New York, NY 20011}

\maketitle

\begin{abstract}
We provide a unified treatment of a broad class of noisy structure recovery problems, known as structured normal means problems.
In this setting, the goal is to identify, from a finite collection of Gaussian distributions with different means, the distribution that produced some observed data.
Recent work has studied several special cases including sparse vectors, biclusters, and graph-based structures.
We establish nearly matching upper and lower bounds on the minimax probability of error for \emph{any} structured normal means problem, and we derive an optimality certificate for the maximum likelihood estimator, which can be applied to many instantiations.
We also consider an experimental design setting, where we generalize our minimax bounds and derive an algorithm for computing a design strategy with a certain optimality property.
We show that our results give tight minimax bounds for many structure recovery problems and consider some consequences for interactive sampling.
\end{abstract}


\section{Introduction}
\label{sec:intro}
The prevalence of high-dimensional signals in modern scientific investigation has inspired an influx of research on recovering \emph{structural information} from noisy data. 
These problems arise across a variety of scientific and engineering disciplines; for example identifying cluster structure in communication or social networks, multiple hypothesis testing in genomics, or anomaly detection in sensor networking.
Specific structural assumptions include sparsity~\cite{haupt2011distilled}, low-rankedness~\cite{donoho2014minimax}, cluster structure~\cite{kolar2011minimax}, and many others~\cite{chandrasekaran2012convex}.

The literature in this direction focuses on three inference goals: detection, localization or recovery, and estimation or denoising.
Detection tasks involve deciding whether an observation contains some meaningful information or is simply ambient noise, while recovery and estimation tasks involve more precisely characterizing the information contained in a signal.
These problems are closely related, but also exhibit important differences, and this paper focuses on the recovery problem, where the goal is to identify, from a finite collection of signals, which signal produced the observed data.

One frustration among researchers is that algorithmic and analytic techniques for these problems differ significantly for different structural assumptions. 
This issue was recently resolved in the context of the estimation, where the \emph{atomic norm}~\cite{chandrasekaran2012convex} has provided a unifying algorithmic and analytical framework, but no such theory is available for detection and recovery problems.
In this paper, we provide a unification for the recovery problem, leading to deeper understanding of how signal structure affects statistical performance.

Modern measurement technology also often provides flexibility in designing strategies for data acquisition, and this adds an element of complexity to inference tasks. 
Data acquisition by both interactive and non-interactive experimental design is the typical situation in domains ranging from network tomography to crowdsourcing, but the statistical implications of these techniques are not fully understood.
This paper also considers the experimental design setting and provides a generic solution and analysis of non-interactive experimental design for structure recovery problems.

To concretely describe our main contributions, we now develop the decision-theoretic framework of this paper.
We study the \textbf{structured normal means problem} defined by a finite collection of vectors $\Vcal = \{v_j\}_{j=1}^M \subset \RR^d$ that index a family of probability distributions $\PP_j = \Ncal(v_j, I_d)$. 
An estimator $T$ for the family $\Vcal$  is a measurable function from $\RR^d$ to $[M]$, and its maximum risk is:
\begin{align*}
\Rcal(T, \Vcal) = \sup_{j \in [M]} \Rcal_j(T, \Vcal), \qquad \Rcal_j(T, \Vcal) = \PP_j[T(y) \ne j],
\end{align*}
where we always use $y \sim \PP_j$ to be the observation. 
We are interested in the \textbf{minimax risk}:
\begin{align}
\Rcal(\Vcal) = \inf_{T} \Rcal(T,\Vcal) = \inf_{T} \sup_{j \in [M]} \PP_j[T(y) \ne j].
\label{eq:risk}
\end{align}
We call this the \textbf{isotropic setting} because each gaussian has spherical covariance.
We are specifically interested in understanding how the family $\Vcal$ influences the minimax risk.
This setting encompasses recent work on sparsity recovery~\cite{haupt2011distilled}, biclustering~\cite{kolar2011minimax,butucea2013detection}, and many graph-based problems~\cite{tanczos2013adaptive}.
An important example to keep in mind is the $k$-sets problem, where the collection $\Vcal$ is formed by vectors $\mu \mathbf{1}_{S}$ for subsets $S \subset [d]$ of size $k$ and some signal strength parameter $\mu$. 
Instantiation of our results to this example will determine the critical scaling of $\mu$ in terms of $d$ and $k$ that is necessary and sufficient for achieving asymptotically zero minimax risk. 

In the \textbf{experimental design setting}, the statistician specifies a sensing strategy, defined by a vector $B \in \RR^d_+$.
Using this strategy, under $\PP_j$, the observation is, for each $i \in [d]$:
\begin{align}
y(i) \sim v_{j}(i) + B(i)^{-1/2}\Ncal(0,1) = \Ncal(v_j(i), B(i)^{-1}).
\label{eq:nonunif_observation}
\end{align}
If $B(i) = 0$, then we say that $y(i) = 0$ almost surely.
We call this distribution $\PP_{j,B}$, to denote the dependence both on the target signal $v_j$ and the sensing strategy $B$. 
The total measurement effort, or \emph{budget}, used by the strategy is $\|B\|_1$, and we are interested in signal recovery under a budget constraint.
Specifically, the minimax risk in this setting is:
\begin{align}
\Rcal(\Vcal, \tau) = \inf_{T, B: \|B\|_1 \le \tau} \sup_{j \in [M]} \PP_{j,B}[T(y) \ne j].
\end{align}

With this background, we now state our main contributions:
\begin{enumerate}
\item We give nearly matching upper and lower bounds on the minimax risk for both isotropic and experimental design settings (Theorems~\ref{thm:gaussian_minimax_bd} and~\ref{thm:gaussian_noninteractive_bd}). 
  This result matches many special cases that we are aware of~\cite{tanczos2013adaptive}. 
  Moreover, in examples with an asymptotic flavor, including the $k$-sets example, this shows that the maximum likelihood estimator (MLE) achieves the minimax rate. 
\item In the isotropic case, we derive a condition on the family $\Vcal$ under which the MLE exactly achieves the minimax risk, thereby certifying optimality of this estimator.
\item We give sufficient conditions that certify optimality of an experimental design strategy and also give an algorithm for computing such a strategy prior to data acquisition.
\item Lastly, we provide many examples to demonstrate the generality and applicability of our results. 
\end{enumerate}

We adopt the following notation: 
For a natural number $M$, we use $[M]$ to denote the set $\{1,\ldots,M\}$. 
For a sequence of problems indexed by a natural number $n \in \NN$ and a signal strength parameter $\mu$, we often state results in terms of the \emph{minimax rate} and use the notation $\mu \asymp f(n)$ to denote this asymptotic scaling. 
This notation means that if $\mu = \omega(1)f(n)$, then the minimax risk can be driven to zero, while if $\mu = o(1)f(n)$, then the minimax risk approaches one asymptotically.
Finally, for vectors $v,M \in \RR^d$, we use $\|v\|_M = \sqrt{v^T\textrm{diag}(M)v}$ to denote the Mahalanobis norm. 

\section{Related Work}
\label{sec:related}

The structured normal means problem has a rich history in statistics and recent attention has focused on combinatorial structures.
This line is motivated by statistical applications involving complex data sources, such as tasks in graph-structured signal processing~\cite{sharpnack2013near}. 
Focusing on detection problems, a number of papers study various combinatorial structures, including $k$-sets~\cite{addarioberry2010combinatorial}, cliques~\cite{tanczos2013adaptive}, paths~\cite{ariascastro2008searching}, and clusters~\cite{sharpnack2013near} in graphs. 
While there are comprehensive results for many of these examples, a unifying theory for detection problems is still undeveloped.

Turning to recovery or localization, again several specific examples have been analyzed.
The most popular example is the biclustering problem~\cite{kolar2011minimax,butucea2013detection,tanczos2013adaptive}, which we study in Section~\ref{ssec:gaussian_biclusters}.
However, apart from this example and a few others~\cite{tanczos2013adaptive}, minimax bounds for the recovery problem are largely unknown.
Moreover, we are unaware of a broadly applicable analysis like the method we develop here.

A unified treatment is possible for estimation problems, where the atomic norm framework gives sharp phase transitions on the mean squared error of the maximum likelihood estimator~\cite{chandrasekaran2012convex,amelunxen2014living,oymak2013sharp}. 
\ifthenelse{\equal{\version}{arxiv}}{The atomic norm is a generic approach for encoding structural assumptions by decomposing the signal into a sparse convex combination of a set of base atoms (e.g., one-sparse vectors).
While this line primarily focuses on linear inverse problems~\cite{chandrasekaran2012convex,amelunxen2014living}, there are results for the estimation problem~\cite{oymak2013sharp,bhaskar2013atomic}, although neither set of results gives lower bounds on the minimax risk.}{}
Note that this approach is based on convex relaxation, and it is not immediate that such a relaxation will succeed for the recovery problem, as the probability of error for any dense family is one.
Relatedly, the non-convexity of our risk poses new challenges that do not arise with the mean squared error objective.

The recovery problem we consider here has also been extensively studied in the signal processing and information theory literature, where it is referred to as Gaussian detection, or decoding with Additive White Gaussian Noise (AWGN), although the motivation and results are quite different.
As the goal in channel coding is to reliably transmit as many bits of information as possible across a noisy channel, the vast majority of channel coding results focus on \emph{codebook design}~\cite{mackay2003information}.
Researchers have studied structured (random and non-random) codebooks solely for computational efficiency, as random codes achieve optimal transmission rates but lead to a computationally intractable decoding problem. 
In contrast, in our setting the structured codebook is inherent to the problem and the main object of interest; the analyst has no control over the codebook and wants to achieve optimal decoding performance for the codebook specified. 

Nevertheless, one line of work from this community analyzes the error probability for the maximum likelihood estimator/decoder for a given codebook (See~\cite{sason2006performance} for a survey).
Classical upper bounds include the min-distance bound and the Gallager bound~\cite{gallager1963low}, although the bound we prove here is also well-known~\cite{herzberg1994techniques}.
Lower bounds come in two flavors: (a) sphere-packing lower bounds and (b) lower bounds on the maximum-likelihood error probability.
The former is a lower bound that is independent of the particular codebook, so it does not give tight bounds on any specific family of vectors, while the latter applies only to the MLE, so it does not relate to the minimax risk.
In contrast, our technique simultaneously applies to any codebook and any estimator, leading to precise lower bounds on the minimax risk. 
To our knowledge, apart from the upper bound in Theorem~\ref{thm:gaussian_minimax_bd}, the results proved here do not appear in the information theory literature. 

Turning briefly to the experimental design setting, a number of recent advances aim to quantify the statistical improvements enabled by experimental or interactive design in specific normal means instantiations~\cite{haupt2011distilled,tanczos2013adaptive}.
\ifthenelse{\equal{\version}{arxiv}}{A unifying, interactive algorithm was recently proposed in the bandit optimization setting~\cite{chen2014combinatorial} but it is not known to improve on non-interactive approaches for the setting we consider.}{}
This work makes important progress, but a general-purpose interactive algorithm and a satisfactory characterization of the advantages offered by interactive sampling remain elusive open questions.
This paper makes progress on the latter by developing lower bounds against all non-interactive approaches.

\section{Main Results}
\label{sec:results}
In this section we develop the main results of the paper.
\ifthenelse{\equal{\version}{arxiv}}{We start by bounding the minimax risk in the isotropic setting, then develop a certificate of optimality for the maximum likelihood estimator.
Lastly, we turn to the experimental design setting.
We provide proofs in Appendix~\ref{sec:gaussian_proofs}.}{We provide proofs in supplementary material~\cite{krishnamurthy2015minimaxity}.}

\subsection{Bounds on the Isotropic Minimax Risk}
In the isotropic case, recall that we are given a finite collection $\Vcal$ of vectors $\{v_j\}_{j=1}^M$ and an observation $y \sim \Ncal(v_j, I_d)$ for some $j \in [M]$.
Given such an observation, a natural estimator is the maximum likelihood estimator (MLE), which outputs the index $j$ for which the observation was most likely to have come from.
This estimator is defined as:
\begin{align}
T_{\textrm{MLE}}(y) &= \argmax_{j\in [M]} \PP_j(y) = \argmin_{j \in [M]} \|v_j - y\|_2^2.
\end{align}
We will analyze this estimator, which partitions $\RR^d$ based on a Voronoi Tessellation of the set $\Vcal$. 

\ifthenelse{\equal{\version}{arxiv}}{As stated, the running time of the estimator is $O(Md)$, but it is worth pausing to remark briefly about computational considerations.
In many examples of interest, the class $\Vcal$ is combinatorial in nature, so $M$ may be exponentially large, and efficient implementations of the MLE may not exist.
However, as our setup does not preclude unstructured problems, the input to the estimator is the complete collection $\Vcal$, so the running time of the MLE is linear in the input size.
If the particular problem is such that $\Vcal$ can be compactly represented (e.g. it has combinatorial structure), then the estimator may not be polynomial-time computable.
This presents a real issue, as researchers have shown that a minimax-optimal polynomial time estimator is unlikely to exist for the biclustering problem~\cite{chen2014statistical,ma2013computational}, which we study in Section~\ref{sec:gaussian_examples}.
However, since the primary interest of this work is statistical in nature, we will ignore computational considerations for most of our discussion. }{}

Our first result is a generic characterization of the minimax risk, which involves analysis of the MLE.
The following function, which we call the \textbf{Exponentiated Distance Function}, plays a fundamental role.
\begin{definition}
For a family $\Vcal$ and $\alpha > 0$, the \textbf{Exponentiated Distance Function} (EDF) is:
\begin{align}
W(\Vcal, \alpha) &= \max_{j \in [M]} W_j(\Vcal, \alpha)\\
 \textrm{with} \qquad W_j(\Vcal, \alpha) &= \sum_{k \ne j} \exp\left( \frac{-\|v_j - v_k\|_2^2}{\alpha}\right)
\end{align}
\end{definition}
In the following theorem, we show that the EDF governs the performance of $T_{\textrm{MLE}}$.
More importantly, this function also leads to a lower bound on the minimax risk, and the combination of these two statements shows that the MLE is nearly optimal for \emph{any} structured normal means problem. 

\begin{theorem}
Fix $\delta \in (0,1)$. If $W(\Vcal, 8) \le \delta$, then $\Rcal(\Vcal) \le \Rcal(\Vcal, T_{\textrm{MLE}}) \le \delta$.
On the other hand, if $W(\Vcal, 2(1-\delta)) \ge 2^{\frac{1}{1-\delta}} - 1$, then $\Rcal(\Vcal) \ge \delta$. 
\label{thm:gaussian_minimax_bd}
\end{theorem}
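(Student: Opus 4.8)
The plan is to prove the two halves separately, since they rely on genuinely different ideas: a union bound for the upper bound, and a Bayesian/average-case argument for the lower bound.

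For the upper bound, I would fix a true index $j$ and bound $\PP_j[T_{\textrm{MLE}}(y) \ne j]$. Since the MLE picks the nearest $v_k$, an error requires that for some $k \ne j$ we have $\|y - v_k\|_2 \le \|y - v_j\|_2$. With $y = v_j + z$, $z \sim \Ncal(0, I_d)$, this event is $\{2 z^T(v_k - v_j) \ge \|v_k - v_j\|_2^2\}$, i.e. a one-dimensional Gaussian with variance $\|v_k - v_j\|_2^2$ exceeding half that quantity, so its probability is $\Phi(-\|v_k - v_j\|_2/2) \le \exp(-\|v_k - v_j\|_2^2/8)$ by the standard Gaussian tail bound $\Phi(-t) \le e^{-t^2/2}$. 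Summing over $k \ne j$ gives $\Rcal_j(T_{\textrm{MLE}}, \Vcal) \le W_j(\Vcal, 8) \le W(\Vcal, 8) \le \delta$; taking the sup over $j$ finishes this direction. This part is routine — the only care needed is getting the constant $8$ to come out right from the $t^2/2$ in the exponent after the factor-of-$2$ from the midpoint.

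For the lower bound, I would pass to the Bayes risk: for any prior $\pi$ on $[M]$, $\Rcal(\Vcal) \ge \inf_T \sum_j \pi_j \PP_j[T(y)\ne j]$, and the Bayes-optimal $T$ is the MAP rule, whose error is $1 - \EE_y \max_j \pi_j \PP_j(y) / \sum_k \pi_k \PP_k(y)$. Taking $\pi$ uniform on the $M$ points and writing things out, the Bayes error equals $\EE_{y}\bigl[1 - \max_j p_j(y)\bigr]$ where $p_j(y) = \PP_j(y)/\sum_k \PP_k(y)$ are the posterior weights. The goal is to show this is $\ge \delta$ under the stated hypothesis on $W(\Vcal, 2(1-\delta))$. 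The key inequality to establish is a lower bound on $\EE_y[1 - \max_j p_j(y)]$ in terms of something like $\EE_y \sum_{k \ne j^\star(y)} p_k(y)$, and then to relate the expected posterior mass on competitors back to the EDF. Concretely, conditioning on the true label being $j$ and $y = v_j + z$, the ratio $\PP_k(y)/\PP_j(y) = \exp(z^T(v_k - v_j) - \tfrac12\|v_k-v_j\|_2^2)$ is log-normal, and $\EE_z$ of its $(1-\delta)$-th power (or a related moment tuned so the variance term produces the $\alpha = 2(1-\delta)$) yields $\exp(-c\|v_k - v_j\|_2^2)$ with the right constant. Combining with a convexity/Jensen step to convert the expected max into the EDF and rearranging should produce the threshold $2^{1/(1-\delta)} - 1$.

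The main obstacle is the lower bound, specifically controlling $\EE_y[1 - \max_j p_j(y)]$ from below: $1 - \max_j p_j$ is not a nice function of the individual likelihood ratios, so I expect the crux is a clean inequality of the form $1 - \max_j p_j(y) \ge \phi\bigl(\sum_{k\ne j^\star} \PP_k(y)/\PP_{j^\star}(y)\bigr)$ for a concave $\phi$, followed by a carefully chosen exponent in the moment computation of the log-normal ratio so that the variance correction $-\tfrac12\|v_k-v_j\|_2^2$ and the chosen power combine to give exactly $-\|v_k-v_j\|_2^2/(2(1-\delta))$. Getting the algebra to line up so the bound becomes vacuous precisely when $W(\Vcal, 2(1-\delta)) < 2^{1/(1-\delta)} - 1$ — and in particular handling the $j$ that attains the max in the definition of $W$ versus the worst-case $j$ in the risk — is where I'd expect to spend most of the effort.
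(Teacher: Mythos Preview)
Your upper bound is correct and matches the paper's argument exactly.

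Your lower bound, however, has a genuine gap. The paper does \emph{not} use a uniform prior followed by a direct analysis of the Bayes error via likelihood-ratio moments; it uses Fano's inequality with a carefully chosen \emph{non-uniform} prior. Specifically, letting $j^\star = \argmax_j W_j(\Vcal,2(1-\delta))$, the paper sets $\pi_k \propto \exp\bigl(-\|v_{j^\star}-v_k\|_2^2/(2(1-\delta))\bigr)$. With this prior, the entropy $H(\pi)$ decomposes as $\log(W(\Vcal,2(1-\delta))+1) + \tfrac{1}{1-\delta}\sum_k \pi_k\, KL(P_k\|P_{j^\star})$, and the identity $\sum_k \pi_k\, KL(P_k\|P_{j^\star}) = \sum_k \pi_k\, KL(P_k\|P_\pi) + KL(P_\pi\|P_{j^\star}) \ge \sum_k \pi_k\, KL(P_k\|P_\pi)$ makes the mutual-information term in Fano cancel against the second piece of $H(\pi)$. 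What remains is exactly the condition $\log 2 \le (1-\delta)\log(W+1)$, i.e.\ $W \ge 2^{1/(1-\delta)}-1$. The non-uniform prior is the key idea: it is what forces the EDF, a maximum over $j$, to appear in a Bayes-risk lower bound, and it is what produces the scale $\alpha = 2(1-\delta)$.

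Your proposed route does not reach this. First, a uniform prior averages over all hypotheses and does not single out the maximizer $j^\star$, so there is no mechanism by which $\max_j W_j$ rather than some average would emerge; you flag this tension yourself at the end but do not resolve it. Second, your moment heuristic cannot produce the constant $2(1-\delta)$: for $y=v_j+z$ one has $\EE_z\bigl[(\PP_k(y)/\PP_j(y))^\beta\bigr] = \exp\bigl(-\tfrac{\beta(1-\beta)}{2}\|v_k-v_j\|_2^2\bigr)$, and since $\beta(1-\beta)\le 1/4$ for real $\beta$, no choice of exponent yields $\exp(-\|v_k-v_j\|_2^2/(2(1-\delta)))$ for $\delta\in(0,1)$. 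Third, the concave-$\phi$ inequality you hope for is left unspecified, and there is no obvious candidate that both lower-bounds $1-\max_j p_j$ and survives the subsequent expectation to give the EDF. The missing ingredient is precisely the Fano-plus-tailored-prior construction.
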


By setting $\delta = 1/2$ above, the second statement in the theorem may be replaced by: If $W(\Vcal, 1) \ge 3$, then $\Rcal(\Vcal) \ge 1/2$. 
This setting often aids interpretability of the lower bound.
Notice that the value of $\alpha$ disagrees between the lower and upper bounds, and this leads to a gap between the necessary and sufficient conditions. 
This is not purely an artifact of our analysis, as there are many examples where the MLE does not exactly achieve the minimax risk.

However, most structured normal means problems of interest have an asymptotic flavor, specified by a sequence of problems $\Vcal_1, \Vcal_2, \ldots$, and a signal-strength parameter $\mu$, with observation $y \sim \Ncal(\mu v_j, I_d)$ for some signal $v_j$ in the current family.
In this asymptotic framework, we are interested in how $\mu$ scales with the sequence to drive the minimax risk to one or zero. 
Almost all existing examples in the literature are of this form~\cite{tanczos2013adaptive}, and in all such problems, Theorem~\ref{thm:gaussian_minimax_bd} shows that the MLE achieves the minimax rate.
\ifthenelse{\equal{\version}{arxiv}}{ To our knowledge, such a comprehensive characterization of recovery problems is entirely new.}{}

Note that the quantity $\Rcal(\Vcal,T_{\textrm{MLE}})$ is simply the worst case probability of error for the MLE, which has been extensively studied in the information theory community.
Classical upper bounds on this quantity include the min-distance bound and Gallager's bound~\cite{gallager1963low}, but the EDF-based bound here has also appeared in the literature~\cite{herzberg1994techniques}.
It is well known that the min-distance bound is often extremely loose (see Section~\ref{sec:gaussian_examples}), while application of Gallager's bound often involves challenging calculations~\cite{sason2006performance}. 
The main novelty in our result is the lower bound, which shows that the EDF also leads to a lower bound on the error probability for all estimators/decoders, generically for all families $\Vcal$. 
This new lower bound, accompanied with the existing upper bound, establishes that the EDF is the fundamental quantity in characterizing the minimax risk in these recovery problems. 

Application of Theorem~\ref{thm:gaussian_minimax_bd} to instantiations of the structured normal means problem requires bounding the EDF, which is significantly simpler than the typical derivation of this style of result.
In particular, proving a lower bound no longer requires construction of a specialized subfamily of $\Vcal$ as was the de facto standard in this line of work~\cite{kolar2011minimax,tanczos2013adaptive}.
In Section~\ref{sec:gaussian_examples}, we show how simple calculations can recover existing results.
We also show how the EDF-based bound often gives much sharper results than the min-distance bound. 

\ifthenelse{\equal{\version}{arxiv}}{Turning to the proof, the EDF arises naturally as an upper bound on the failure probability of the MLE after applying a union bound and a Gaussian tail bound.
Obtaining a lower bound based on the EDF is more challenging, and our proof is based on application of Fano's Inequality. 
We use a version of Fano's Inequality that allows a non-uniform prior and explicitly construct this prior using the EDF.
This leads to our more general lower bound. }{}

\subsection{Minimax-Optimal Recovery}
\label{ssec:gaussian_minimax_opt}
Theorem~\ref{thm:gaussian_minimax_bd} shows that that maximum likelihood estimator achieves rate-optimal performance for \emph{all} structured normal means recovery problems with the asymptotic flavor described.
However, in some cases the MLE does not achieve the exact minimax risk, and it is therefore not the optimal estimator.
In this section, we derive a sufficient condition for the \emph{exact} minimax optimality of the MLE.
As we will see via examples, the MLE is minimax optimal for several well-studied instantiations of the structured normal means problem, although analytically calculating the minimax risk is challenging.

The sufficient condition for optimality depends on a particular geometric structure of the family $\Vcal$:
\begin{definition}
A family $\Vcal$ is \textbf{unitarily invariant} if there exists a set of orthogonal matrices $\{R_i\}_{i=1}^N$ such that for each vector $v \in \Vcal$, the set $\{R_iv\}_{i=1}^N$ is exactly $\Vcal$. 
\end{definition}
In other words, the instance $\Vcal$ can be generated by applying the orthogonal transforms to any fixed vector in the collection.
Unitarily invariant problems exhibit high degrees of symmetry, and our next result shows that this symmetry suffices to certify optimality of the MLE.

\begin{theorem}
If $\Vcal$ is unitarily invariant, then the MLE is minimax optimal. 
\label{thm:gaussian_rotation}
\end{theorem}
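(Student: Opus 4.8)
The plan is to exploit the symmetry of a unitarily invariant family to show that the MLE's worst-case error equals its average error under the uniform prior, and that this average error is a lower bound on the minimax risk. Concretely, let $\{R_i\}_{i=1}^N$ be the orthogonal matrices witnessing unitary invariance, so that for every fixed $v \in \Vcal$ the multiset $\{R_i v\}_{i=1}^N$ equals $\Vcal$. The first step is to observe that these transforms act on the index set $[M]$: applying $R_i$ permutes $\Vcal$, and since each $R_i$ is an isometry of $\RR^d$, it carries the Voronoi cell of $v_j$ to the Voronoi cell of $R_i v_j$. Hence $R_i$ induces a bijection $\pi_i$ on $[M]$ with $\Rcal_j(T_{\textrm{MLE}}, \Vcal) = \Rcal_{\pi_i(j)}(T_{\textrm{MLE}}, \Vcal)$ for all $i,j$, because the error probability $\PP_j[T_{\textrm{MLE}}(y) \ne j] = \PP_{v_j}[y \notin \mathrm{Vor}(v_j)]$ is invariant under simultaneously rotating both the mean and the tessellation (the isotropic noise $\Ncal(0,I_d)$ is rotation invariant).

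The second step is to argue that this group action is transitive enough that all the per-signal risks $\Rcal_j(T_{\textrm{MLE}}, \Vcal)$ are in fact equal. This follows from the definition: for any two indices $j$ and $k$, since $\{R_i v_j\}_{i=1}^N = \Vcal$, there is some $i$ with $R_i v_j = v_k$, i.e. $\pi_i(j) = k$; combined with the invariance from step one, $\Rcal_j(T_{\textrm{MLE}},\Vcal) = \Rcal_k(T_{\textrm{MLE}},\Vcal)$. Therefore $\Rcal(T_{\textrm{MLE}},\Vcal) = \Rcal_j(T_{\textrm{MLE}},\Vcal) = \frac{1}{M}\sum_{j=1}^M \Rcal_j(T_{\textrm{MLE}},\Vcal)$, so the MLE's maximum risk coincides with its Bayes risk under the uniform prior on $[M]$.

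The third step is the standard minimax-to-Bayes comparison: for any estimator $T$ and any prior $\pi$, $\sup_j \Rcal_j(T,\Vcal) \ge \sum_j \pi_j \Rcal_j(T,\Vcal)$, and the Bayes risk under the uniform prior is minimized by the Bayes-optimal decision rule, which for equal priors and a $0$-$1$ loss is exactly the maximum likelihood rule. Thus
\begin{align*}
\Rcal(\Vcal) = \inf_T \sup_j \Rcal_j(T,\Vcal) \ge \inf_T \frac{1}{M}\sum_{j=1}^M \Rcal_j(T,\Vcal) = \frac{1}{M}\sum_{j=1}^M \Rcal_j(T_{\textrm{MLE}},\Vcal) = \Rcal(T_{\textrm{MLE}},\Vcal),
\end{align*}
and since trivially $\Rcal(\Vcal) \le \Rcal(T_{\textrm{MLE}},\Vcal)$, equality holds and the MLE is minimax optimal.

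I expect the main obstacle to be step one: carefully verifying that the orthogonal transforms descend to well-defined permutations of $[M]$ and that the Voronoi tessellation transforms correspondingly. Two subtleties need care — ties in the argmax (sets of measure zero under the Gaussian, so they do not affect the error probability, but one should note this) and the possibility that $\Vcal$ has repeated vectors or that some $R_i$ fixes a vector, which only makes $\pi_i$ a genuine permutation rather than causing trouble. Once the equivariance $T_{\textrm{MLE}}(R_i y)$ versus $R_i$ acting on labels is pinned down, the rest is the textbook fact that a uniform-prior Bayes rule is minimax whenever all its conditional risks are equal, applied to the specific structure here.
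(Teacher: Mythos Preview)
Your proposal is correct and follows essentially the same approach as the paper: show that the orthogonal symmetries carry each MLE acceptance region (Voronoi cell) to another, so by rotation invariance of the isotropic Gaussian the per-hypothesis risks $\Rcal_j(T_{\textrm{MLE}},\Vcal)$ are all equal, and then invoke the standard fact that a Bayes rule under the uniform prior with constant conditional risk is minimax. The paper carries out the same argument slightly more explicitly---writing the acceptance regions as polyhedra $\{z:\Gamma_j z \ge b_j\}$, noting $b_j=0$ since all $\|v_j\|_2$ coincide, and verifying row-by-row that $\Gamma_j R_{kj}$ equals $\Gamma_k$ up to reordering---but this is exactly the coordinate version of your ``isometries map Voronoi cells to Voronoi cells'' observation.
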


Some remarks about the theorem are in order:
\begin{packed_enum}
\item This theorem reduces the question of optimality to a purely geometric characterization of  $\Vcal$ and, as we will see, many important problems are unitarily invariant.
\ifthenelse{\equal{\version}{arxiv}}{One common family of orthogonal matrices is the set of all permutation matrices on $\RR^d$.}{}
\item This result does not characterize the risk of the MLE; it only shows that no other estimator has better risk.
  Specifically, it does not provide an analytic bound that is sharper than Theorem~\ref{thm:gaussian_minimax_bd}.
  From an applied perspective, an optimality certificate for an estimator is more important than a bound on the risk as it helps govern practical decisions, although risk bounds enable theoretical comparison.
\item Lastly, the result is not asymptotic in nature but rather shows that the MLE achieves the \emph{exact} minimax risk for a fixed family $\Vcal$.
  We are not aware of any other results in the literature that certify optimality of the MLE under our measure of risk. 
\end{packed_enum}

\ifthenelse{\equal{\version}{arxiv}}{The proof of this theorem is based on showing that the point-wise risk $\Rcal_j(T,\Vcal)$ for the MLE is constant across the hypotheses $j \in [M]$. 
This argument uses the unitary invariance property and an explicitly representation of the MLE as a collection of polyhedral acceptance regions, with one region per hypothesis.
Finally, we employ a dual characterization of the minimax risk to show that if the point-wise risk functional is constant, then the MLE is minimax-optimal.}{}

In problems where Theorem~\ref{thm:gaussian_rotation} can be applied, our results give a complete understanding of the isotropic case.
We know that the MLE exactly achieves the minimax risk and Theorem~\ref{thm:gaussian_minimax_bd} also gives upper and lower bounds that match asymptotically.

\subsection{The Experimental Design Setting}
We now turn to the experimental design setting, where the statistician specifies a strategy $B \in \RR^{d}_+$ and receives observation $y \sim \PP_{j,B}$ given by Equation~\ref{eq:nonunif_observation}.
Our main insight is that the choice of $B$ only changes the metric structure of $\RR^d$, and this change can be incorporated into the proof of Theorem~\ref{thm:gaussian_minimax_bd}.
Specifically, the likelihood for hypothesis $j$, under sampling strategy $B$ is:
\begin{align*}
\PP_j(y|B) = \prod_{i=1}^d \sqrt{\dfrac{B(i)}{2\pi}} \exp\left\{\frac{-B(i) (v_j(i) - y(i))^2}{2}\right\}
\end{align*}
and the maximum likelihood estimator is:
\begin{align*}
T_{\textrm{MLE}}(y,B) = \argmin_{j \in [M]} \|v_j - y\|_B^2.
\end{align*}

We port Theorem~\ref{thm:gaussian_minimax_bd} to this setting and show the following:
\begin{theorem}
Fix $\delta \in (0,1)$ and any sampling strategy $B$ with $\|B\|_1 \le \tau$. Define the \textbf{Sampling Exponentiated Distance Function} SEDF:
\begin{align}
W(\Vcal, \alpha, B) = \max_{j\in [M]} \sum_{k \ne j} \exp\left(\frac{-\|v_j - v_k\|_{B}^2}{\alpha}\right)
\end{align}
If $W(\Vcal, 8, B) \le \delta$ then $\Rcal(\Vcal,\tau) \le \Rcal(\Vcal,T_{\textrm{MLE}}(y,B)) \le \delta$.
Conversely, if $W(\Vcal, 2(1-\delta), B) \ge 2^{\frac{1}{1-\delta}} - 1$, then $\inf_{T} \sup_{j \in [M]} \PP_{j,B}[T(y) \ne j] \ge \delta$. 
\label{thm:gaussian_noninteractive_bd}
\end{theorem}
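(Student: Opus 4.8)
The plan is to reduce Theorem~\ref{thm:gaussian_noninteractive_bd} to Theorem~\ref{thm:gaussian_minimax_bd} by a linear change of coordinates that absorbs the sensing strategy $B$ into the geometry of the family. Fix a strategy $B$ with $\|B\|_1 \le \tau$ and define the rescaled family $\tilde\Vcal = \{\tilde v_j\}_{j=1}^M$ by $\tilde v_j(i) = \sqrt{B(i)}\,v_j(i)$, together with the rescaled observation $\tilde y(i) = \sqrt{B(i)}\,y(i)$. On coordinates with $B(i) > 0$ we have $y(i) \sim \Ncal(v_j(i), B(i)^{-1})$ under $\PP_{j,B}$, hence $\tilde y(i) \sim \Ncal(\tilde v_j(i), 1)$; on coordinates with $B(i) = 0$ the model sets $y(i) = 0$ almost surely and also $\tilde v_j(i) = 0$ for every $j$, so such coordinates carry no information for any hypothesis. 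I would therefore argue that the $B$-design problem on $\Vcal$ with data $y$ is statistically equivalent to the isotropic problem on $\tilde\Vcal$ with data $\tilde y$: restricting to the measured coordinates, $y(i)\mapsto\sqrt{B(i)}\,y(i)$ is a bijection, so every estimator for one problem induces an estimator for the other with identical pointwise risk, and the unmeasured coordinates can simply be deleted from $y$ and from every $v_j$ without changing anything.

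The change of coordinates identifies the relevant functionals exactly: $\|\tilde v_j - \tilde v_k\|_2^2 = \sum_i B(i)(v_j(i) - v_k(i))^2 = \|v_j - v_k\|_B^2$, so $W(\tilde\Vcal, \alpha) = W(\Vcal, \alpha, B)$ for every $\alpha$, and $\|\tilde v_j - \tilde y\|_2^2 = \|v_j - y\|_B^2$, so the isotropic MLE applied to $\tilde y$ is precisely $T_{\textrm{MLE}}(y,B)$. For the upper bound, if $W(\Vcal, 8, B) \le \delta$ then $W(\tilde\Vcal, 8) \le \delta$, so Theorem~\ref{thm:gaussian_minimax_bd} gives $\Rcal(\tilde\Vcal) \le \Rcal(\tilde\Vcal, T_{\textrm{MLE}}) \le \delta$; translating back, $\sup_j \PP_{j,B}[T_{\textrm{MLE}}(y,B)\ne j] \le \delta$, and since $B$ is feasible this bounds $\Rcal(\Vcal,\tau)$ by $\delta$. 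For the lower bound, the equivalence gives $\inf_T \sup_j \PP_{j,B}[T(y)\ne j] = \Rcal(\tilde\Vcal)$, and since $W(\tilde\Vcal, 2(1-\delta)) = W(\Vcal, 2(1-\delta), B) \ge 2^{1/(1-\delta)} - 1$, the second part of Theorem~\ref{thm:gaussian_minimax_bd} yields $\Rcal(\tilde\Vcal) \ge \delta$, as claimed.

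There is no deep obstacle here; essentially all the work is already done in Theorem~\ref{thm:gaussian_minimax_bd}. The one point that needs care is making the ``statistical equivalence'' rigorous: one must check that rescaling by $\sqrt{B(i)}$ is an invertible (hence risk-preserving) reparametrization on the measured coordinates, and that the degenerate coordinates with $B(i)=0$ are excised cleanly so that both the equivalence and the identity $W(\tilde\Vcal,\alpha) = W(\Vcal,\alpha,B)$ hold on the nose. An alternative, if one prefers to avoid the reduction, is to re-run the proof of Theorem~\ref{thm:gaussian_minimax_bd} verbatim with $\|\cdot\|_B$ in place of $\|\cdot\|_2$: the union-bound-plus-Gaussian-tail step controlling $T_{\textrm{MLE}}(y,B)$ and the Fano argument with the EDF-based non-uniform prior both carry over unchanged, since the log-likelihood ratio between $\PP_{j,B}$ and $\PP_{k,B}$ depends on the data only through $\|v_j - y\|_B^2 - \|v_k - y\|_B^2$.
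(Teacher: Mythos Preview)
Your proposal is correct and rests on exactly the same observation as the paper: the experimental design problem under strategy $B$ is just the isotropic problem in the Mahalanobis geometry $\|\cdot\|_B$, so both the MLE union-bound analysis and the Fano lower bound go through with $\|\cdot\|_B$ in place of $\|\cdot\|_2$. The only cosmetic difference is packaging: you phrase it as a black-box reduction via the whitening map $\tilde y(i)=\sqrt{B(i)}\,y(i)$ and then invoke Theorem~\ref{thm:gaussian_minimax_bd} on $\tilde\Vcal$, whereas the paper simply states that the proof of Theorem~\ref{thm:gaussian_minimax_bd} carries over verbatim once one notes $T_{\textrm{MLE}}(y,B)=\argmin_j\|v_j-y\|_B^2$ and $KL(\PP_{j,B}\|\PP_{k,B})=\tfrac12\|v_j-v_k\|_B^2$; you yourself flag this re-run as an alternative, and the two are equivalent.
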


The structure of the theorem is almost identical to that of Theorem~\ref{thm:gaussian_minimax_bd}, but it is worth making some important observations.
First, the theorem holds for any non-interactive sampling strategy $B \in \RR^d_+$, so the upper bound is strictly more general than Theorem~\ref{thm:gaussian_minimax_bd}.
\ifthenelse{\equal{\version}{arxiv}}{Therefore, this theorem also gives bounds for the non-isotropic or heteroskedastic case with known, shared covariance.}{}
Secondly, any strategy can be used to derive an upper bound on the minimax risk, but the same is not true for the lower bound.
Instead the lower bound is dependent on the strategy, so one must minimize over strategies to lower bound $\Rcal(\Vcal, \tau)$.
Fortunately, since the SEDF is convex in $B$ and the budget constraint is polyhedral, computing the best strategy can be solved by convex programming.
This gives a new algorithm for designing sampling procedures. 

\ifthenelse{\equal{\version}{arxiv}}{Specifically, for any $\alpha$, to compute an associated design strategy, we solve the convex program,
\begin{align}
\textrm{minimize}_{B \in \RR_+^d, \|B\|_1 \le \tau} \max_{j \in [M]}\sum_{k \ne j} \exp\left(\frac{-\|v_j - v_k\|_{B}^2}{\alpha}\right),
\label{eq:sampling_opt}
\end{align}
to obtain the sampling strategy $\hat{B}$ that minimizes the SEDF.
For example, solving Program~\ref{eq:sampling_opt} with $\alpha = 1$ results in a strategy $\hat{B}$, and if $W(\Vcal, 1, \hat{B}) \ge 3$, then we know that the minimax risk $\Rcal(\Vcal, \tau)$ over all strategies is at least $1/2$. 
On the other hand, solving with $\alpha = 8$ to obtain a (different) sampling strategy $\hat{B}$ and then using $\hat{B}$ with the MLE would give the tightest upper bound on the risk attainable by our proof technique. 
In Section~\ref{sec:gaussian_examples}, we demonstrate an example where this optimization leads to a non-uniform sampling strategy that outperforms uniform sampling.}{}

In general it is challenging to analytically certify that an allocation strategy $\hat{B}$ minimizes the SEDF, but in some cases it is possible. 
Since the SEDF is convex in $B$, specializing the first-order optimality conditions for the resulting convex program gives the following:
\begin{proposition}
Let $\hat{B}$ be a sampling strategy with $\|\hat{B}\|_1 = \tau$, $S(\hat{B}) \subset \Vcal$ be the set of hypotheses achieving the maximum in $W(\Vcal, \alpha, \hat{B})$, and $\pi$ be a distribution on $S(\hat{B})$.
If the quantity,
\begin{align*}
\EE_{j \sim \pi} \sum_{k \ne j} (v_k(i) - v_j(i))^2 \exp(-\|v_k - v_j\|_{\hat{B}}^2),
\end{align*}
is constant across $i \in [d]$, then $\hat{B}$ is a minimizer of $W(\Vcal, \alpha, B)$ subject to $\|B\|_1 \le \tau$.
\label{prop:optimal_sampling}
\end{proposition}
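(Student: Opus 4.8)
The plan is to read Program~\ref{eq:sampling_opt} as a constrained convex program and to verify the first-order optimality conditions at $\hat B$, which are sufficient for a global minimum. First I would record convexity of the objective: writing $W_j(\Vcal,\alpha,B) = \sum_{k \ne j}\exp(-\|v_j - v_k\|_B^2/\alpha)$, each squared distance $\|v_j - v_k\|_B^2 = \sum_{i} B(i)(v_j(i)-v_k(i))^2$ is linear in $B$, so the exponential of it is convex, hence $W_j(\Vcal,\alpha,\cdot)$ is convex and $W(\Vcal,\alpha,\cdot) = \max_{j \in [M]} W_j(\Vcal,\alpha,\cdot)$ is convex as a pointwise maximum; the feasible set $\mathcal{C} = \{B \in \RR_+^d : \|B\|_1 \le \tau\}$ is a polytope. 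It therefore suffices to exhibit a subgradient $g \in \partial_B W(\Vcal,\alpha,\hat B)$ satisfying the variational inequality $\langle g, B - \hat B\rangle \ge 0$ for every $B \in \mathcal{C}$; the subgradient inequality $W(\Vcal,\alpha,B) \ge W(\Vcal,\alpha,\hat B) + \langle g, B-\hat B\rangle$ then certifies that $\hat B$ minimizes $W(\Vcal,\alpha,\cdot)$ over $\mathcal{C}$.

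Next I would handle the nonsmoothness. Since each $W_j(\Vcal,\alpha,\cdot)$ is differentiable, the subdifferential of the pointwise maximum at $\hat B$ is the convex hull of the gradients of the active pieces, $\partial_B W(\Vcal,\alpha,\hat B) = \mathrm{conv}\{\nabla_B W_j(\Vcal,\alpha,\hat B) : j \in S(\hat B)\}$. Consequently any distribution $\pi$ on $S(\hat B)$ yields a legitimate subgradient $g = \EE_{j \sim \pi}\nabla_B W_j(\Vcal,\alpha,\hat B)$. A coordinatewise derivative gives
\[
\frac{\partial W_j(\Vcal,\alpha,B)}{\partial B(i)} = -\frac{1}{\alpha}\sum_{k \ne j}(v_j(i)-v_k(i))^2 \exp\left(\frac{-\|v_j - v_k\|_B^2}{\alpha}\right),
\]
so that $g(i)$ equals $-1/\alpha$ times the quantity displayed in the proposition.

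The final step closes the argument. By hypothesis that quantity is independent of $i$, so $g = -c\,\mathbf{1}$ for a scalar $c$, necessarily $c \ge 0$ since it is an average of nonnegative terms. Then for any $B \in \mathcal{C}$, nonnegativity of $B$ and $\hat B$ gives $\langle \mathbf{1}, B\rangle = \|B\|_1 \le \tau = \|\hat B\|_1 = \langle \mathbf{1}, \hat B\rangle$, where I use $\|\hat B\|_1 = \tau$; hence $\langle g, B - \hat B\rangle = c(\|\hat B\|_1 - \|B\|_1) \ge 0$. Equivalently, $-g = c\,\mathbf{1}$ lies in the normal cone of $\mathcal{C}$ at $\hat B$, precisely because the budget constraint is tight there. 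Together with $g \in \partial_B W(\Vcal,\alpha,\hat B)$, this is the first-order condition, and convexity upgrades it to global optimality of $\hat B$.

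I expect the only real subtlety to be the treatment of nonsmoothness: one must invoke the active-set formula for the subdifferential of a maximum and note that the hypothesis only asks that the $\pi$-averaged gradient over $S(\hat B)$ --- not each individual active gradient --- be proportional to the all-ones vector, while the tightness $\|\hat B\|_1 = \tau$ is exactly what places that direction in the normal cone of the budget polytope. Verifying convexity and differentiating $W_j$ are routine.
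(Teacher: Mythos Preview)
Your proposal is correct and follows essentially the same approach as the paper: both arguments compute the coordinatewise gradient of each $W_j$, invoke the convex-hull formula for the subdifferential of a pointwise maximum to obtain a $\pi$-averaged subgradient, and then use first-order optimality for a convex program (the paper phrases this via the Lagrangian, you via the equivalent variational inequality/normal-cone condition). If anything your treatment is slightly more careful, since you explicitly verify the sign of the constant and use the tightness $\|\hat B\|_1=\tau$ to place $-g$ in the normal cone of the full feasible polytope $\{B\ge 0:\|B\|_1\le\tau\}$, whereas the paper's proof leaves the nonnegativity constraint implicit.
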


In many cases, this result leads to analytic lower bounds.
Specifically, the result is especially useful when $\hat{B}$ is uniform across the coordinates, and $S(\hat{B}) = [M]$, so that all of the hypotheses achieve the maximum.
In this case, it often suffices to choose $\pi$ to be uniform over the hypotheses and exploit the high degree of symmetry to demonstrate the condition holds.
As we will see in Section~\ref{sec:gaussian_examples}, many examples studied in the literature exhibit the requisite symmetry for this proposition to be applied in a straightforward manner.

Note that Tanczos and Castro~\cite{tanczos2013adaptive} establish a similar sufficient condition for the uniform sampling strategy to be optimal. 
Their result however is slightly less general in that it only certifies optimality for the uniform sampling strategy, whereas ours, in principle, can be applied more universally. 
In addition, their result applies only to problems where the hypotheses are of the form $\mu \mathbf{1}_S$ for a collection of subsets while ours is more general.
This generality is important for some examples in Section~\ref{sec:gaussian_examples}.
The other main difference is that their approach is not based on the SEDF, so their result is not directly applicable here. 

\section{Examples}
\label{sec:gaussian_examples}

\ifthenelse{\equal{\version}{arxiv}}{This section contains four instantiations of structured normal means problems, and concrete results easily attainable from our approach.}{This section contains three instantiations of structured normal means problems, and concrete results easily attainable from our approach.}
These examples have the asymptotic flavor described before, where we are interested in how a signal strength parameter $\mu$ scales with a sequence of instances.

The first example, the $k$-sets problem, is well studied, and as a warmup, we show how our technique recovers existing results.
The second and third examples are motivated by biclustering and hierarchical clustering applications; in both problems our techniques establish a lower bound against all non-interactive approaches, demonstrating separation between interactive and non-interactive procedures.
In the hierarchical clustering case, this separation is new. 
\ifthenelse{\equal{\version}{arxiv}}{Finally the fourth example is a graph-structured signal processing problem where we empirically show that uniform sampling can be sub-optimal.
The requisite calculations for these examples are deferred to Appendix~\ref{sec:gaussian_proofs}.}{}

\subsection{$k$-sets}
In the $k$-sets problem, we have $M = {d \choose k}$ and each vector $v_j = \mathbf{1}_{S_j}$ where $S_j \subset [d]$ and $|S_j| = k$. 
The observation is $y \sim \Ncal(\mu v_j, I_d)$ for some hypothesis $j$.
\begin{corollary}
The minimax rate for $k$-sets is $\mu \asymp \sqrt{\log(k(d-k))}$ and $\mu \asymp \sqrt{\frac{d}{\tau}\log(k(d-k))}$, with budget $\tau$.
In the isotropic case, the MLE is minimax optimal.
\label{cor:ksets}
\end{corollary}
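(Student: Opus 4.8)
The plan is to apply Theorem~\ref{thm:gaussian_minimax_bd} in the isotropic case and Theorem~\ref{thm:gaussian_noninteractive_bd} in the budgeted case, after computing the (S)EDF for the $k$-sets family, which collapses cleanly by symmetry. For $\Vcal = \{\mu\mathbf{1}_S : S \subseteq [d], |S| = k\}$, if $|S_j \triangle S_k| = 2\ell$ then $\|\mu\mathbf{1}_{S_j} - \mu\mathbf{1}_{S_k}\|_2^2 = 2\ell\mu^2$, and the number of $S_k$ with $|S_j \triangle S_k| = 2\ell$ is $\binom{k}{\ell}\binom{d-k}{\ell}$, independent of $j$. Hence $W(\Vcal,\alpha) = \sum_{\ell=1}^{\min(k,d-k)} \binom{k}{\ell}\binom{d-k}{\ell} e^{-2\ell\mu^2/\alpha}$, the same for every $j$. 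I would bound it above using $\binom{k}{\ell}\binom{d-k}{\ell} \le (k(d-k))^\ell$ and summing the geometric series, giving $W(\Vcal,\alpha) \le k(d-k)e^{-2\mu^2/\alpha}/(1 - k(d-k)e^{-2\mu^2/\alpha})$ whenever the denominator is positive, and below by retaining only $\ell = 1$: $W(\Vcal,\alpha) \ge k(d-k)e^{-2\mu^2/\alpha}$.

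For the isotropic rate, note that because the definition of $\asymp$ quantifies over $\omega(1)$ and $o(1)$ prefactors, it suffices to locate the transition up to a constant factor. For the upper bound take $\alpha = 8$: if $\mu = \omega(1)\sqrt{\log(k(d-k))}$ then $k(d-k)e^{-\mu^2/4} \to 0$, so eventually $W(\Vcal,8) \le \epsilon$ for any $\epsilon$, and Theorem~\ref{thm:gaussian_minimax_bd} yields $\Rcal(\Vcal) \to 0$. For the lower bound, fix $\delta < 1$ and take $\alpha = 2(1-\delta)$: if $\mu = o(1)\sqrt{\log(k(d-k))}$ then $W(\Vcal,2(1-\delta)) \ge k(d-k)e^{-\mu^2/(1-\delta)} = (k(d-k))^{1-o(1)} \to \infty$, eventually exceeding the constant $2^{1/(1-\delta)} - 1$, so $\Rcal(\Vcal) \ge \delta$; letting $\delta \uparrow 1$ gives $\Rcal(\Vcal) \to 1$. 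This proves $\mu \asymp \sqrt{\log(k(d-k))}$. For exact MLE optimality, $\Vcal$ is invariant under the group of $d \times d$ permutation matrices, which are orthogonal and, applied to any fixed $\mu\mathbf{1}_S$, generate exactly $\Vcal$; hence $\Vcal$ is unitarily invariant and Theorem~\ref{thm:gaussian_rotation} applies.

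For the budgeted case, $\|\mu\mathbf{1}_{S_j} - \mu\mathbf{1}_{S_k}\|_B^2 = \mu^2\sum_{i \in S_j \triangle S_k} B(i)$, so for the uniform strategy $\hat B \equiv \tau/d$ the SEDF again collapses to $W(\Vcal,\alpha,\hat B) = \sum_{\ell\ge1}\binom{k}{\ell}\binom{d-k}{\ell} e^{-2\ell\mu^2\tau/(d\alpha)}$ — the isotropic EDF with $\mu^2$ rescaled by $\tau/d$ — and its value is the same for all $j$, so $S(\hat B) = [M]$. To certify that $\hat B$ minimizes the SEDF over $\|B\|_1 \le \tau$, I would invoke Proposition~\ref{prop:optimal_sampling} with $\pi$ uniform over $[M]$: the quantity $\EE_{j\sim\pi}\sum_{k\ne j}(v_k(i)-v_j(i))^2 e^{-\|v_k-v_j\|_{\hat B}^2}$ only involves pairs $(j,k)$ with $i$ in exactly one of $S_j,S_k$, and by permutation symmetry of the $k$-sets family this count, together with the induced distribution over $|S_j\triangle S_k|$, is independent of $i$, so the expression is constant in $i$. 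Given this, the upper bound uses $\hat B$ with $\alpha = 8$ exactly as before, while the lower bound uses $\min_B W(\Vcal,2(1-\delta),B) = W(\Vcal,2(1-\delta),\hat B)$ and the $\ell = 1$ term; the transition now occurs at $\mu^2\tau/d \asymp \log(k(d-k))$, i.e. $\mu \asymp \sqrt{\frac{d}{\tau}\log(k(d-k))}$.

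The binomial estimate and the two limiting arguments are routine. The only genuinely delicate step is checking the hypotheses of Proposition~\ref{prop:optimal_sampling} for the uniform strategy — in particular arguing carefully that the stated expectation is coordinate-independent, which is where the permutation symmetry of $\Vcal$ does the real work — together with being sure the $\asymp$ conclusion survives the constant-factor mismatch between the $\alpha = 8$ upper bound and the $\alpha = 2(1-\delta)$ lower bound, which it does precisely because $\asymp$ is defined via $\omega(1)$/$o(1)$ prefactors.
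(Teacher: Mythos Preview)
Your proposal is correct and follows essentially the same route as the paper: compute the EDF via the symmetric-difference decomposition $W(\Vcal,\alpha)=\sum_{\ell\ge 1}\binom{k}{\ell}\binom{d-k}{\ell}e^{-2\ell\mu^2/\alpha}$, bound it above and below to apply Theorems~\ref{thm:gaussian_minimax_bd} and~\ref{thm:gaussian_noninteractive_bd}, invoke permutation invariance for Theorem~\ref{thm:gaussian_rotation}, and certify the uniform strategy via Proposition~\ref{prop:optimal_sampling}. The only cosmetic differences are that the paper uses the $(ne/s)^s$-type binomial bounds and controls the sum by $k$ times its first term, whereas you use the cruder $\binom{k}{\ell}\binom{d-k}{\ell}\le (k(d-k))^\ell$ and sum the resulting geometric series, and the paper verifies Proposition~\ref{prop:optimal_sampling} by writing out the subgradients explicitly in the two cases $v_j(i)\in\{0,\mu\}$ rather than appealing directly to the permutation symmetry; both variants yield the same conclusion.
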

This corollary follows simply by bounding the EDF for the $k$-sets problem using binomial approximations.
Using Proposition~\ref{prop:optimal_sampling}, it is easy to verify that uniform sampling is optimal here, which immediately gives the second claim.
Finally using the set of all permutation matrices and exploiting symmetry, we can easily verify that this class is unitarily invariant.
These bound agrees with established results in the literature~\cite{tanczos2013adaptive}.
To contrast, the min-distance bound from classical coding theory would reveal that the MLE succeeds when $\mu = \omega(\sqrt{k\log(d/k)})$.
This bound is polynomially worse than the one attained by our more-refined EDF-based bound.

\subsection{Biclusters}
\label{ssec:gaussian_biclusters}
The biclustering problem operates over $\RR^{d\times d}$ with $M = {d \choose k}^2$.
We parametrize the class $\Vcal$ with two indices so that $v_{ij} = \mathbf{1}_{S_i}\mathbf{1}_{S_j}^T \in \{0,1\}^{d\times d}$ with $|S_i| = |S_j| = k$.
The observation is $y \sim \Ncal(\mu \textrm{vec}(v_{ij}), I_{d^2})$ for a hypothesis $(i,j)$. 
\begin{corollary}
The minimax rate for biclusters is $\mu \asymp \sqrt{\frac{\log(k(d-k))}{k}}$ and $\mu \asymp \sqrt{\frac{d^2}{\tau k}\log(k(d-k))}$, with budget constraint $\tau$.
In the isotropic case, the MLE is minimax optimal.
\label{cor:biclusters}
\end{corollary}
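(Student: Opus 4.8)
The plan is to obtain both rates from Theorems~\ref{thm:gaussian_minimax_bd} and~\ref{thm:gaussian_noninteractive_bd} by computing the EDF (resp.\ SEDF) of the rescaled family, and then to establish the two optimality claims via Theorem~\ref{thm:gaussian_rotation} and Proposition~\ref{prop:optimal_sampling}. The first step is to reduce everything to pairwise distances. For hypotheses $(i,j)$ and $(i',j')$, set $a = |S_i\cap S_{i'}|$ and $b = |S_j\cap S_{j'}|$; counting the entries on which $\mathbf{1}_{S_i}\mathbf{1}_{S_j}^\top$ and $\mathbf{1}_{S_{i'}}\mathbf{1}_{S_{j'}}^\top$ disagree gives $\|\mu v_{ij}-\mu v_{i'j'}\|_2^2 = 2\mu^2(k^2-ab)$. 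Because any bicluster maps to any other by permuting rows and columns, the maximum defining $W(\Vcal,\alpha)$ is attained at every hypothesis, so grouping the competing hypotheses by $(a,b)$ yields
\begin{align*}
W(\Vcal, \alpha) = \sum_{\substack{0\le a,b\le k \\ (a,b)\ne(k,k)}} \binom{k}{a}\binom{d-k}{k-a}\binom{k}{b}\binom{d-k}{k-b}\exp\!\left(\frac{-2\mu^2(k^2-ab)}{\alpha}\right).
\end{align*}

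For the isotropic upper bound I would substitute $a=k-s$, $b=k-t$ and use the elementary bounds $\binom{k}{s}\binom{d-k}{s}\le (k(d-k))^s$ together with $k^2-ab = k(s+t)-st \ge k(s+t)/2$ (valid since $s,t\le k$). Each summand is then at most $\exp\!\big((s+t)(\log(k(d-k)) - \mu^2 k/\alpha)\big)$, so as soon as $\mu^2 k/\alpha \ge 2\log(k(d-k))$ the sum over $(s,t)\ne(0,0)$ is a convergent geometric series of order $O\big(1/(k(d-k))\big)$. Taking $\alpha=8$ shows $W(\Vcal,8)\to 0$ whenever $\mu=\omega(1)\sqrt{\log(k(d-k))/k}$, and Theorem~\ref{thm:gaussian_minimax_bd} then forces $\Rcal(\Vcal)\to 0$. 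For the lower bound I would retain only the two dominant groups $(a,b)\in\{(k-1,k),(k,k-1)\}$, which contribute $2k(d-k)\exp(-2\mu^2 k/\alpha)$; choosing $\alpha=2(1-\delta)$ and using $\mu^2 k = o(\log(k(d-k)))$, the left-hand side of the second condition in Theorem~\ref{thm:gaussian_minimax_bd} diverges while the right-hand side stays bounded for each fixed $\delta<1$, so $\Rcal(\Vcal)\to 1$. Together these give $\mu\asymp\sqrt{\log(k(d-k))/k}$.

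For the budget-constrained rate I would take the uniform strategy $\hat B = (\tau/d^2)\mathbf{1}$, under which $\|v_{ij}-v_{i'j'}\|_{\hat B}^2 = (\tau/d^2)\|v_{ij}-v_{i'j'}\|_2^2$, so $W(\Vcal,\alpha,\hat B)$ is the displayed EDF with $\mu^2$ replaced by $\mu^2\tau/d^2$; the upper-bound computation then gives $\Rcal(\Vcal,\tau)\to 0$ once $\mu=\omega(1)\sqrt{(d^2/(\tau k))\log(k(d-k))}$. For the matching lower bound, Proposition~\ref{prop:optimal_sampling} certifies that $\hat B$ minimizes the SEDF: every hypothesis attains the max, and with $\pi$ uniform over hypotheses the quantity $\EE_{j\sim\pi}\sum_{k\ne j}(v_k(i)-v_j(i))^2\exp(-\|v_k-v_j\|_{\hat B}^2)$ is identical for every coordinate $i\in[d]\times[d]$ by the row/column permutation symmetry, hence constant. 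So the lower bound may be run with $\hat B$, yielding $\mu\asymp\sqrt{(d^2/(\tau k))\log(k(d-k))}$. Finally, the isotropic family is unitarily invariant: taking $\{R_\ell\}$ to be the orthogonal maps $M\mapsto PMQ^\top$ over permutation matrices $P,Q$, applying all of them to any fixed $v_{ij}$ reproduces all of $\Vcal$, so Theorem~\ref{thm:gaussian_rotation} gives exact minimax optimality of the MLE.

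The main obstacle is the tail control in the upper bound: one must verify that the exponential decay $\exp(-2\mu^2(k^2-ab)/\alpha)$ dominates the combinatorially large counts $\binom{k}{a}\binom{d-k}{k-a}\binom{k}{b}\binom{d-k}{k-b}$ simultaneously over all overlap patterns, which is precisely where the inequality $k^2-ab\ge k(s+t)/2$ is doing the work. Everything else is symmetry bookkeeping and substitution into the previously stated theorems and proposition.
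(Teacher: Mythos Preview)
Your proposal is correct and follows essentially the same route as the paper: both compute pairwise distances via row/column overlaps, bound the EDF using the decoupling inequality $k^2-ab=k(s+t)-st\ge k(s+t)/2$, certify uniform sampling via Proposition~\ref{prop:optimal_sampling} with the uniform prior, and establish exact optimality through the row/column permutation action in Theorem~\ref{thm:gaussian_rotation}. The only cosmetic differences are that the paper parameterizes by $(s_r,s_c)=(k-a,k-b)$ and, for the lower bound, retains the $s_r=s_c=1$ term from the double sum rather than your $(s,t)\in\{(1,0),(0,1)\}$ terms; both choices yield the same rate.
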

Our bounds agree with existing analyses of this class~\cite{kolar2011minimax,butucea2013detection,tanczos2013adaptive}.
The biclustering problem is interesting because there is a simple interactive algorithm that succeeds if $\mu = \omega\left(\sqrt{\left(\frac{d^2}{\tau k^2} + \frac{d}{\tau}\right)\log d}\right)$, which is a factor of $\sqrt{k}$ smaller than the lower bound established here, demonstrating concrete statistical gains from interactivity~\cite{tanczos2013adaptive}.
\ifthenelse{\equal{\version}{arxiv}}{We provide an analysis of this interactive algorithm in Appendix~\ref{sec:gaussian_proofs}.}{We provide an analysis of this interactive algorithm in the supplementary material~\cite{krishnamurthy2015minimaxity}.}
Note also that our EDF-based bound is polynomially better than classical min-distance bound.

\subsection{Hierarchical Clustering}
We study a model for similarity-based hierarchical clustering from Balakrishnan et al.~\cite{balakrishnan2011noise}.
This model is known as the Constant Block Model (CBM) and the balanced version is parameterized by a number of objects $n$, a minimum cluster size $m$, both of which are powers of $2$, and a separation parameter $\mu$. 
The hierarchical clustering is a perfectly balanced binary hierarchy on $n$ objects with minimum cluster size $m$, where within cluster similarities are exactly $\mu$ larger than the between-cluster similarities at each level of the hierarchy.
The induced $n \times n$ similarity matrix is related to an \emph{ultrametric}
\ifthenelse{\equal{\version}{arxiv}}{(We provide a formal definition in Appendix~\ref{sec:gaussian_proofs}).}{(We provide a formal definition in~\cite{krishnamurthy2015minimaxity}).}

Balakrishnan et al.~\cite{balakrishnan2011noise} analyze a recursive spectral clustering algorithm on this model in the presence of Gaussian noise.
By associating the similarity matrix of each possible hierarchical clustering with an element of $\Vcal$, their setting is a special case of the structured normal means problem, and our results can be used to characterize the minimax rate.
\begin{corollary}
In the CBM, if $\mu = o\left(\sqrt{\frac{\log(nm)}{m}}\right)$ and under budget constraint $\tau$, if $\mu = o\left(\sqrt{\frac{n^2\log(nm)}{m\tau}}\right)$, then the minimax risk remains bounded away from $0$. 
In the isotropic case, the MLE is minimax optimal.
\label{cor:hcluster}
\end{corollary}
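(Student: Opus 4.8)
The plan is to run everything through Theorems~\ref{thm:gaussian_minimax_bd} and~\ref{thm:gaussian_noninteractive_bd}, Proposition~\ref{prop:optimal_sampling}, and Theorem~\ref{thm:gaussian_rotation}; since the corollary asserts only lower bounds (necessary conditions for vanishing risk) and exact optimality of the MLE, no upper-bound computation is required. First I would record the combinatorial description: $\Vcal$ is the set of (vectorized) ultrametric similarity matrices of all perfectly balanced binary hierarchies on $n$ objects with minimum cluster size $m$, living in $\RR^d$ with $d = \binom{n}{2} = \Theta(n^2)$ informative coordinates (the diagonal self-similarities are constant across $\Vcal$ and can be discarded). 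The heart of the argument is a \textbf{minimum-perturbation lemma}: the cheapest move from one hierarchy $v_j$ to a neighbour $v_k$ is to transpose a single object between two \emph{sibling} leaf clusters $A,B$ of size $m$ sharing a parent. Inspecting the effect of swapping $a\in A$ with $b\in B$, the only pairs whose similarity changes are the $4(m-1)$ pairs $\{x,y\}$ with $x\in\{a,b\}$, $y\in (A\cup B)\setminus\{a,b\}$, each changing by exactly $\mu$ (a within-leaf similarity becomes a between-sibling similarity one level up), while the pair $\{a,b\}$ and every pair involving a third leaf cluster are untouched because siblings share their lowest common ancestor with any other cluster. Hence $\|v_j-v_k\|_2^2 = \Theta(m\mu^2)$, and there are $\frac{n}{2m}\cdot m^2 = \Theta(nm)$ distinct such $v_k$ for each $v_j$.

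For the isotropic lower bound I would restrict the sum defining $W_j(\Vcal,1)$ to just these $\Theta(nm)$ sibling-transposition neighbours, getting $W(\Vcal,1)\ge \Theta(nm)\exp(-\Theta(m\mu^2))$; when $\mu = o(\sqrt{\log(nm)/m})$ this equals $\Theta(nm)\cdot(nm)^{-o(1)}\to\infty$, so eventually $W(\Vcal,1)\ge 3$ and the $\delta=1/2$ form of Theorem~\ref{thm:gaussian_minimax_bd} gives $\Rcal(\Vcal)\ge 1/2$. For the budget-$\tau$ lower bound I must lower bound $W(\Vcal,1,B)$ for \emph{every} $B$ with $\|B\|_1\le\tau$, which I would do by showing the uniform allocation $\hat B$ with $\hat B(i)=\tau/d$ on the informative coordinates minimizes the SEDF: the group $S_n$ acting on the objects induces permutation (hence orthogonal) matrices on $\RR^d$ that carry $\Vcal$ onto itself and fix $\hat B$, so by symmetry all hypotheses attain the maximum (i.e. $S(\hat B)=[M]$), and with $\pi$ uniform on $[M]$ the quantity $\EE_{j\sim\pi}\sum_{k\ne j}(v_k(i)-v_j(i))^2\exp(-\|v_k-v_j\|_{\hat B}^2)$ is independent of the informative coordinate $i$ because $S_n$ is transitive on object pairs; Proposition~\ref{prop:optimal_sampling} then certifies optimality of $\hat B$. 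Under $\hat B$ a sibling transposition has $\|v_j-v_k\|_{\hat B}^2 = (\tau/d)\Theta(m\mu^2) = \Theta(\tau m\mu^2/n^2)$, so $\min_B W(\Vcal,1,B) = W(\Vcal,1,\hat B)\ge \Theta(nm)\exp(-\Theta(\tau m\mu^2/n^2))$, which diverges when $\mu = o(\sqrt{n^2\log(nm)/(m\tau)})$; applying the converse part of Theorem~\ref{thm:gaussian_noninteractive_bd} to every admissible $B$ forces $\Rcal(\Vcal,\tau)\ge 1/2$. For the MLE claim, the same permutation matrices show $\Vcal$ is \emph{unitarily invariant}: $S_n$ acts transitively on the labeled balanced binary hierarchies of the fixed shape determined by $n$ and $m$, so the orbit of any $v\in\Vcal$ under these orthogonal maps is exactly $\Vcal$, and Theorem~\ref{thm:gaussian_rotation} yields minimax optimality of the MLE in the isotropic case.

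The main obstacle is the minimum-perturbation lemma — in particular being confident that the near-hypotheses I exhibit suffice and that I am not missing a cheaper structural move. For the \emph{lower} bounds this is actually benign: any sub-collection of neighbours gives a valid lower bound on the EDF/SEDF, so I would lean entirely on the sibling-transposition family and avoid an exact distance enumeration (higher-level transpositions change similarities by $2\mu, 3\mu,\ldots$ over comparable numbers of pairs and so only contribute lower-order terms to $W_j$). The remaining subtlety is bookkeeping for Proposition~\ref{prop:optimal_sampling}: one must excise the constant diagonal coordinates before writing the first-order optimality condition, and verify that the symmetry being invoked — transitivity of $S_n$ on object pairs together with $S(\hat B)=[M]$ — genuinely holds for the CBM, which rests on the fact that every balanced hierarchy of the prescribed shape is a relabeling of every other.
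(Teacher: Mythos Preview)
Your proposal is correct and follows essentially the same route as the paper: lower-bound the EDF/SEDF by the $\Theta(nm)$ sibling-leaf transpositions with squared distance $\Theta(m\mu^2)$, invoke Proposition~\ref{prop:optimal_sampling} with the uniform prior and the $S_n$-symmetry to certify that uniform allocation over the $\binom{n}{2}$ off-diagonal coordinates minimizes the SEDF, and apply Theorem~\ref{thm:gaussian_rotation} via the same permutation action for MLE optimality. Your group-theoretic phrasing of the symmetry step (transitivity of $S_n$ on object pairs and on balanced hierarchies) is a cleaner packaging of the paper's explicit bijection $p$ swapping $a\leftrightarrow a'$, $b\leftrightarrow b'$, but the content is identical; your care about excising the constant diagonal coordinates before applying Proposition~\ref{prop:optimal_sampling} is exactly what the paper does implicitly by taking $\hat B(i)=\tau/\binom{n}{2}$.
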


This corollary is a strict generalization of the minimax analysis of Balakrishnan et al.~\cite{balakrishnan2011noise} who only consider the case $m = n/2$.
Moreover, the results of Tanczos and Castro~\cite{tanczos2013adaptive} do not apply here as the family does not correspond to indicator vectors of some set system.
Thus, our more general treatment enables analysis of this important settings.

Note however that we only prove a lower bound on the minimax risk here. 
This lower bound, combined with existing analysis, establishes \emph{exponential} separation between interactive and non-interactive approaches for this hierarchical clustering setting.
The interactive algorithm of Krishnamurthy et al.~\cite{krishnamurthy2012efficient} can recover clusters of size $m = \Omega(\log^2n)$ with $\tau = \Theta(n\textrm{polylog}(n))$ and $\mu = \Theta(1)$.
On the other hand, Corollary~\ref{cor:hcluster} shows that, with these settings of $\mu$ and $\tau$, no non-interactive algorithm can recovery clusters of size $m = o(n)$ which is exponentially worse than the interactive algorithm.

\ifthenelse{\equal{\version}{arxiv}}{
\subsection{Stars}
\label{ssec:gaussian_stars}
Let $G = (V, E)$ be a graph and let the edges be numbered $1, \ldots, d$. 
The class $\Vcal$ is the set of all \emph{stars} in the graph, that is the vector $v_j \in \{0,1\}^d$ is the indicator vector of all edges emanating from the $j$th node in the graph. 
Again the observation is $y \sim \Ncal(\mu v_j, I_d)$ for some $j \in [|V|]$.
\begin{corollary}
In the stars problem if the ratio between the maximum and minimum degree is bounded by a constant, i.e. $\frac{\textrm{deg}_{\max}}{\textrm{deg}_{\min}} \le c$, then the minimax rate is $\mu \asymp \sqrt{\frac{\log(|V| - \textrm{deg}_{\min})}{\textrm{deg}_{\min}}}$. 
\label{cor:stars}
\end{corollary}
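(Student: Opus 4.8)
The plan is to apply Theorem~\ref{thm:gaussian_minimax_bd} to the signal-scaled family $\{\mu v_j\}_{j\in V}$, which reduces the corollary to estimating its Exponentiated Distance Function from above (for the vanishing-risk regime) and from below (for the risk-to-one regime). The one structural fact we need is that for distinct nodes $j\neq k$ the only edge incident to both is $\{j,k\}$, so $\langle v_j,v_k\rangle=\mathbf{1}\{j\sim k\}$ and hence
\[
\|\mu v_j-\mu v_k\|_2^2=\mu^2\big(\textrm{deg}(j)+\textrm{deg}(k)-2\,\mathbf{1}\{j\sim k\}\big).
\]
Every pairwise distance is thus pinned down by the degrees and the adjacency relation: a non-neighbor $k$ of $j$ sits at squared distance $\mu^2(\textrm{deg}(j)+\textrm{deg}(k))\in[2\mu^2\textrm{deg}_{\min},\,2\mu^2\textrm{deg}_{\max}]$ from $\mu v_j$, while a neighbor sits at squared distance $\mu^2(\textrm{deg}(j)+\textrm{deg}(k)-2)$, which under the standing non-degeneracy assumption (no connected component is a single edge, equivalently all $v_j$ distinct) is $\Omega(\mu^2\textrm{deg}_{\min})$.

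For the vanishing-risk direction, I would split $W_j(\Vcal,8)$ into the $\textrm{deg}(j)\le\textrm{deg}_{\max}$ neighbor terms and the at most $|V|-1-\textrm{deg}_{\min}$ non-neighbor terms, obtaining a bound of the shape
\[
W_j(\Vcal,8)\;\le\;\textrm{deg}_{\max}\,e^{-\Omega(\mu^2\textrm{deg}_{\min})}+\big(|V|-1-\textrm{deg}_{\min}\big)\,e^{-\mu^2\textrm{deg}_{\min}/4}.
\]
The non-neighbor term vanishes precisely when $\mu^2\textrm{deg}_{\min}=\omega(1)\log(|V|-\textrm{deg}_{\min})$, i.e.\ $\mu=\omega(1)\sqrt{\log(|V|-\textrm{deg}_{\min})/\textrm{deg}_{\min}}$, which is the claimed rate; using $\textrm{deg}_{\max}\le c\,\textrm{deg}_{\min}$ the neighbor term is also $\le c\,\textrm{deg}_{\min}\,e^{-\Omega(\mu^2\textrm{deg}_{\min})}$ and vanishes in the same regime. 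Theorem~\ref{thm:gaussian_minimax_bd} applied to the MLE (with $\delta\to0$) then gives $\Rcal(\Vcal)\to0$. We do not claim exact minimax optimality of the MLE here, since a general bounded-degree graph need not be vertex-transitive, so the star family need not be unitarily invariant and Theorem~\ref{thm:gaussian_rotation} need not apply.

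For the risk-to-one direction, I would take the node $j^\star$ of minimum degree, discard its neighbor terms in $W_{j^\star}\big(\Vcal,2(1-\delta)\big)$, and keep only the $|V|-1-\textrm{deg}_{\min}$ non-neighbors, each at squared distance $\le(1+c)\mu^2\textrm{deg}_{\min}$, giving
\[
W_{j^\star}\big(\Vcal,2(1-\delta)\big)\;\ge\;\big(|V|-1-\textrm{deg}_{\min}\big)\,e^{-(1+c)\mu^2\textrm{deg}_{\min}/(2(1-\delta))}.
\]
When $\mu=o(1)\sqrt{\log(|V|-\textrm{deg}_{\min})/\textrm{deg}_{\min}}$ this diverges, so for each fixed $\delta\in(0,1)$ it eventually exceeds $2^{1/(1-\delta)}-1$, whence $\Rcal(\Vcal)\ge\delta$ by Theorem~\ref{thm:gaussian_minimax_bd}; letting $\delta\to1$ drives the risk to one. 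The delicate point — and the only place the bounded-degree-ratio hypothesis is genuinely used — is controlling the neighbor terms in the upper bound: a neighbor of $j$ is only $O(\mu)$ closer than a non-neighbor, and there can be up to $\textrm{deg}_{\max}$ of them, so a priori they could swamp the $|V|-\textrm{deg}_{\min}$ non-neighbor terms that set the rate; bounding $\textrm{deg}_{\max}/\textrm{deg}_{\min}$ keeps every distance within a constant factor of $\mu\sqrt{2\textrm{deg}_{\min}}$ and the neighbor count within a constant factor of $\textrm{deg}_{\min}$, so the whole EDF is governed by $\textrm{deg}_{\min}$ and $|V|-\textrm{deg}_{\min}$ alone. (As with the earlier examples the $\asymp$ statement is read in the regime where $|V|-\textrm{deg}_{\min}\to\infty$ and $\log\textrm{deg}_{\min}=O(\log(|V|-\textrm{deg}_{\min}))$, so that $\log|V|\asymp\log(|V|-\textrm{deg}_{\min})$; this rules out the essentially-complete graph, for which the rate is genuinely different.)
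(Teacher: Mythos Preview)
Your proposal is correct and follows essentially the same route as the paper: compute $\|\mu v_j-\mu v_k\|_2^2=\mu^2(\textrm{deg}(j)+\textrm{deg}(k)-2\,\mathbf{1}\{j\sim k\})$, split the EDF into neighbor and non-neighbor contributions, bound exponents by $\textrm{deg}_{\min}$ and $\textrm{deg}_{\max}$, and pick the minimum-degree vertex for the lower bound. The one cosmetic difference is that the paper optimizes its upper bound on $W_j$ over $\textrm{deg}(j)$ and finds the worst case at $\textrm{deg}(j)=\textrm{deg}_{\min}$ (so the neighbor count there is $\textrm{deg}_{\min}$, and the degree-ratio hypothesis enters only through the lower bound), whereas you crudely cap the neighbor count by $\textrm{deg}_{\max}$ and then invoke the ratio bound in the upper bound as well.
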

Again this agrees with a recent result of Tanczos and Castro~\cite{tanczos2013adaptive}, who consider $s$-stars of the complete graph, formed by choosing a vertex, and then activating $s$ of the edges emanating out of that vertex.
The two bounds agree in the special case of the complete graph with $s = |V|-1$, but otherwise are incomparable, as they consider different problem structures.
Note that the degree requirement here is not fundamental in Theorem~\ref{thm:gaussian_minimax_bd}, but arises from problem-specific approximations.

\begin{figure*}
\begin{center}
\includegraphics[scale=0.24]{./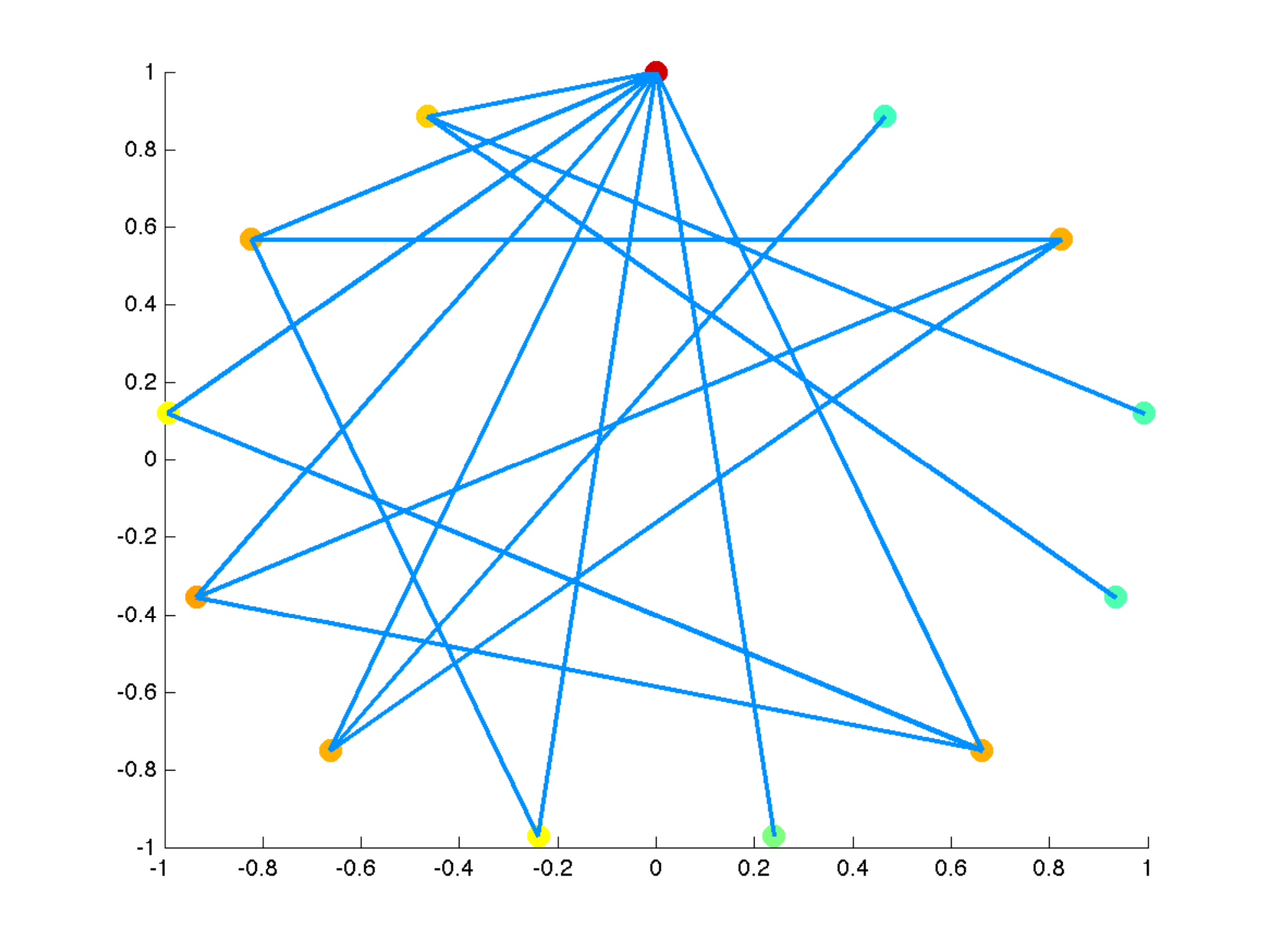}
\includegraphics[scale=0.24]{./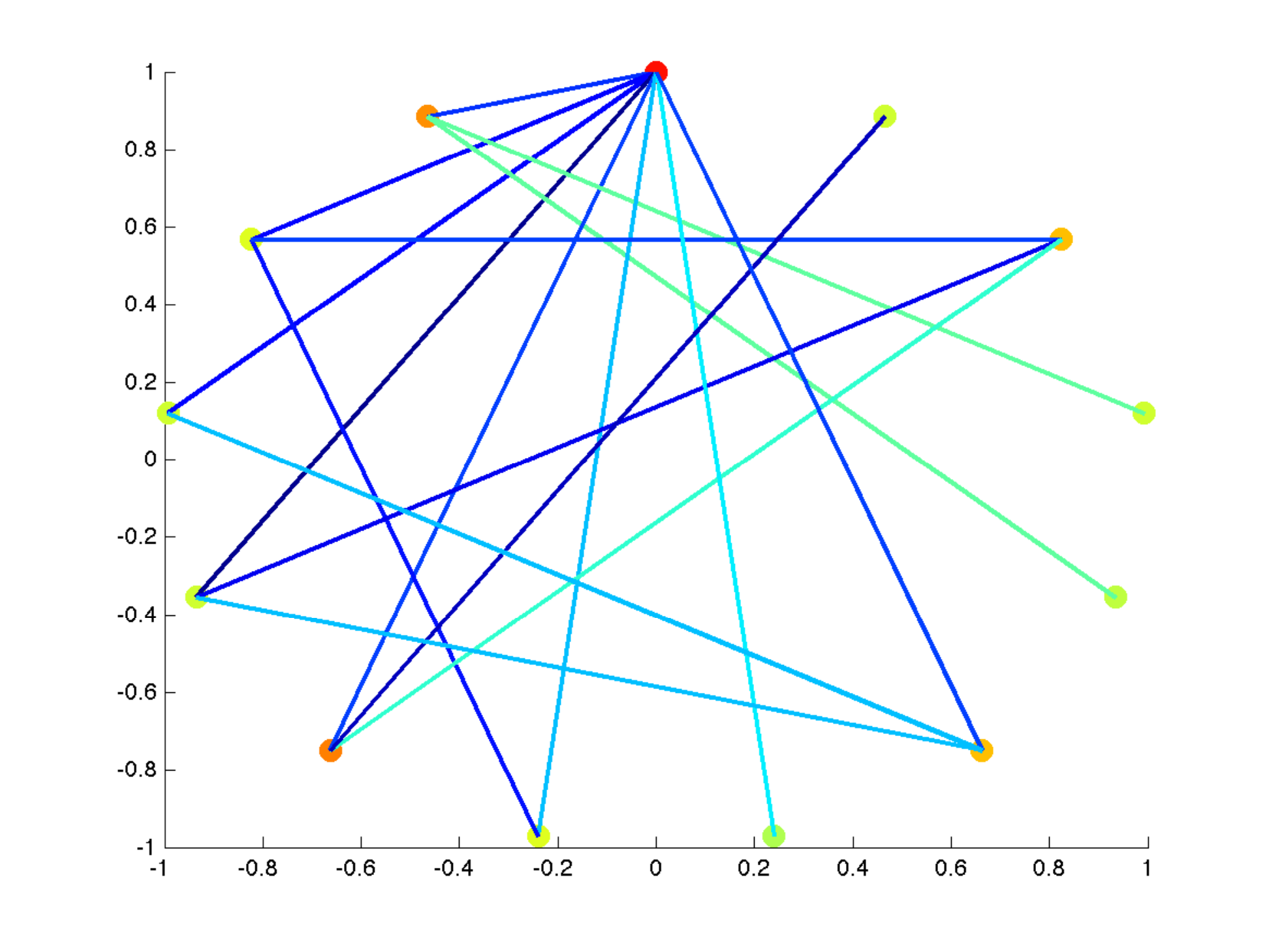}
\includegraphics[scale=0.24]{./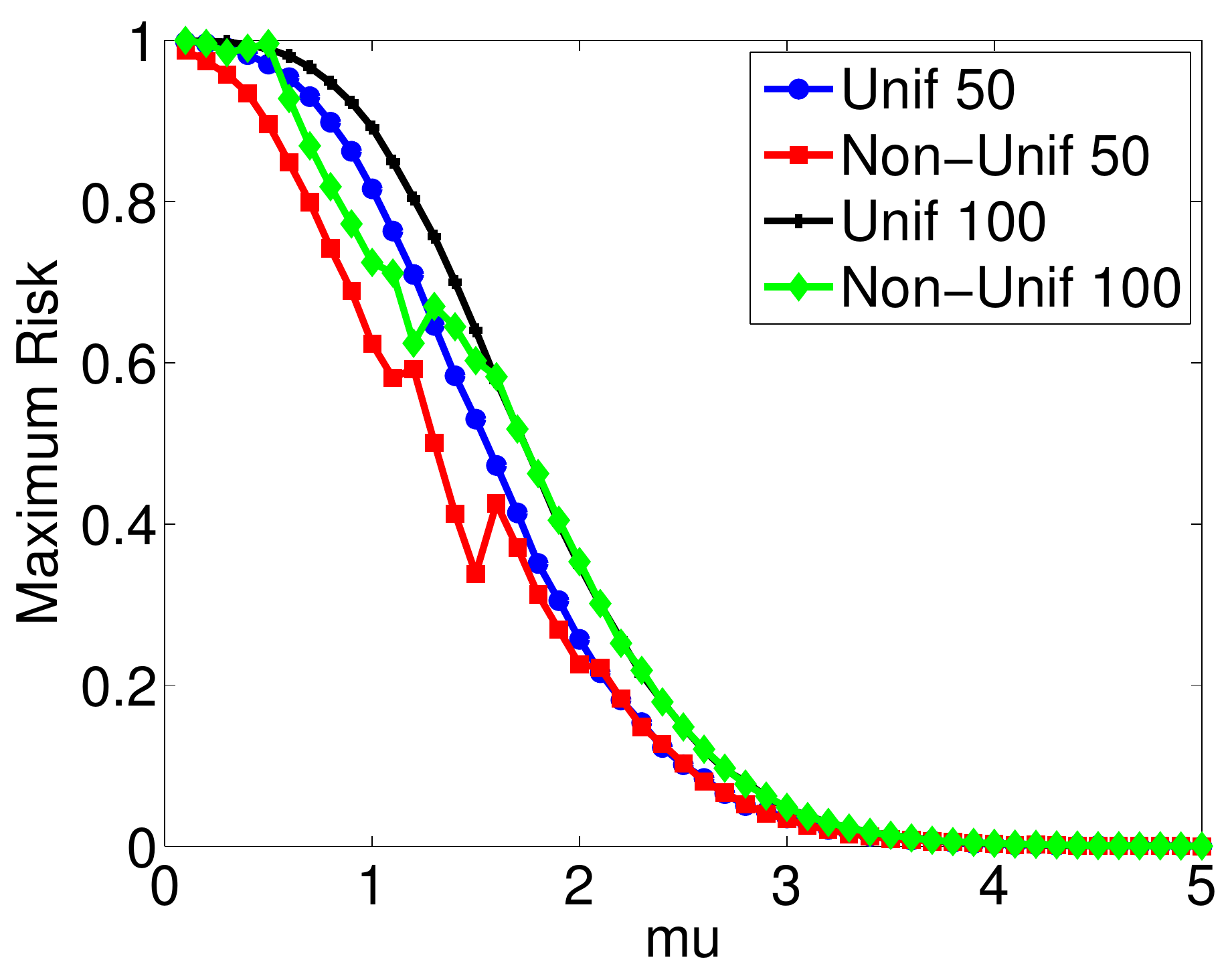}
\end{center}
\caption{Left: A realization of the stars problem for a graph with 13 vertices and 34 edges with sampling budget $\tau = 34$.
Edge color reflects allocation of sensing energy and vertex color reflects success probability for MLE under that hypothesis (warmer colors are higher for both).
Isotropic (left) has minimum success probability of $0.44$ and experimental design (center) has minimum success probability $0.56$.
Right: Maximum risk for isotropic and experimental design sampling as a function of $\mu$ for stars problem on a 50 and 100-vertex graph.}
\label{fig:gaussian_stars}
\end{figure*}

We highlight this example because the uniform allocation strategy does not necessarily minimize $W(\Vcal, \alpha, B)$.
In Figure~\ref{fig:gaussian_stars}, we construct a graph according to the Barab\'{a}si-Albert model~\cite{albert2002statistical} and consider the class of stars on this graph.
The simulation results show that optimizing the SEDF to find a sampling strategy is never worse than uniform sampling, and for low signal strengths it can lead to significantly lower maximum risk.
Note that the risk for both uniform and non-uniform sampling approaches zero as $\mu \rightarrow \infty$, so for large $\mu$, there is little advantage to optimizing the sampling scheme.
}{}

\section{Discussion}
This paper studies the structured normal means problem and gives a unified characterization of the minimax risk for isotropic and experimental design settings.
Our work provides insights into how to choose estimators and how to design sampling strategies for these problems.
Our lower bounds are critical in separating non-interactive and interactive sampling, which is an important research direction.

There are a number of exciting directions for future work, including extensions to other structure discovery problems, and to other observation models, such as compressive observations.
We are most interested in developing a unifying theory for interactive sampling, analogous to the theory developed here.
Another important and challenging direction is to consider recovery problems that involve nuisance parameters.
We look forward to studying these problems in future work.


\section*{Acknowledgements}
AK would like to thank Martin Azizyan, Sivaraman Balakrishnan, Gautam Dasarathy, James Sharpnack, and Aarti Singh for enlightening discussions during the development of this work.

\bibliography{thesis}
\bibliographystyle{plain}

\vfill
\newpage 
\appendix
\section{Proofs}
\label{sec:gaussian_proofs}

\subsection{Proof of Theorem~\ref{thm:gaussian_minimax_bd}}
\textbf{Analysis of MLE:} 
We first analyze the maximum likelihood estimator:
\begin{align*}
T_{MLE}(y) = \argmin_{j \in [M]} \|v_j - y\|_2^2
\end{align*}
This estimator succeeds as long as $\|v_k - y\|_2^2 > \|v_{j^\star} - y\|_2^2$ for each $k \ne j^\star$, when $y \sim \PP_{j^\star}$. 
This condition is equivalent to:
\begin{align*}
\|v_k - y\|_2^2 > \|v_{j^\star} - y\|_2^2 \Leftrightarrow \langle \epsilon, v_k - v_{j^\star} \rangle < \frac{1}{2}\|v_{j^\star} - v_k\|_2^2
\end{align*}
where $\epsilon \sim \Ncal(0, 1)$. 
This follows from writing $y = v_{j^\star} + \epsilon$ and then expanding the squares.
So we must simultaneously control all of these events, for fixed $j^\star$:
\begin{align*}
& \PP_{\epsilon \sim \Ncal(0, I_d)}\left[\forall k \ne j^\star. \langle \epsilon, v_k - v_{j^\star} \rangle < \|v_{j^\star} - v_k\|_2^2/2\right] \\
& = 1 - \PP_{\epsilon \sim \Ncal(0, I_d)}\left[ \exists k \ne j^\star. \langle \epsilon, v_k - v_{j^\star}\rangle \ge \|v_{j^\star} - v_k\|_2^2/2\right] \\
& \ge 1- \sum_{k \ne j^\star} \PP_{\epsilon \sim \Ncal(0, I_d)}\left[\langle \epsilon, v_k - v_{j^\star} \rangle \ge \|v_{j^\star} - v_k\|_2^2/2\right]
\end{align*}
By a gaussian tail bound, this probability is:
\begin{align*}
\PP_{\epsilon \sim \Ncal(0, I_d)}\left[ \langle \epsilon, v_k - v_{j^\star}\rangle \ge \|v_{j^\star} - v_k\|_2^2/2\right] \le \exp\left\{-\frac{1}{8}\|v_{j^\star} - v_k\|_2^2\right\}
\end{align*}
So that the total failure probability is upper bounded by:
\begin{align*}
\PP_{j^\star}[\hat{j} = j^\star] \le \sum_{k \ne j^\star} \exp\left\{-\frac{1}{8}\|v_{j^\star} - v_k\|_2^2\right\} = W_{j^\star}(\Vcal, 8)
\end{align*}
So if $j$ is the truth, then the probability of error is smaller than $\delta$ when $W_{j}(\Vcal, 8) \le \delta$.
For the maximal (over hypothesis choice $j$) probability of error to be smaller than $\delta$, it suffices to have $W(\Vcal, 8) \le \delta$. 

\textbf{Fundamental Limit:}
We now turn to the fundamental limit. 
We start with a version of Fano's inequality with non-uniform prior. 
\begin{lemma}[Non-uniform Fano Inequality]
\label{lem:clean_fano}
Let $\Theta = \{\theta\}$ be a parameter space that indexes a family of probability distributions $P_\theta$ over a space $\Xcal$.
Fix a prior distribution $\pi$, supported on $\Theta$ and consider $\theta \sim \pi$ and $X \sim P_\theta$. 
Let $f: \Xcal \rightarrow \Theta$ be any possibly randomized mapping, and let $p_e = \PP_{\theta \sim \pi, X \sim P_\theta}\left[ f(X) \ne \theta \right]$ denote the probability of error. 
Then:
\begin{align*}
p_e \ge 1 - \frac{\sum_{\theta} \pi(\theta)KL(P_\theta || P_{\pi}) + \log 2}{H(\pi)}
\end{align*}
where $P_\pi(\cdot) = \EE_{\theta \sim \pi}P_\theta(\cdot)$ is the mixture distribution.
In particular, we have:
\begin{align*}
\inf_{f} \sup_{\theta} \PP_{X \sim P_\theta}[f(X) \ne \theta] \ge \inf_{f} \EE_{\theta \sim \pi} \PP_{X \sim P_\theta}[f(X) \ne \theta] \ge 1 - \frac{\sum_{\theta} \pi(\theta)KL(P_\theta || P_{\pi}) + \log 2}{H(\pi)}
\end{align*}
\end{lemma}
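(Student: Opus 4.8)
The plan is to recognize the right-hand side as the Bayesian form of Fano's inequality and then follow the classical derivation, being careful to retain the non-uniform prior. First, one observes the ``golden formula'' identity $\sum_{\theta}\pi(\theta)\,KL(P_\theta\|P_\pi) = I(\theta;X)$, the mutual information between $\theta$ and $X$ in the Bayesian experiment $\theta\sim\pi$, $X\sim P_\theta$ (mutual information equals the average divergence from each $P_\theta$ to the mixture $P_\pi$). So the claim reduces to $p_e \ge 1 - (I(\theta;X)+\log 2)/H(\pi)$. Granting this, the final two-sided display is immediate: a supremum over $\theta$ dominates the $\pi$-average of $\PP_{X\sim P_\theta}[f(X)\ne\theta]$, which by definition is $p_e$, and the resulting lower bound does not depend on $f$, so one may take the infimum over $f$ on the left.

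To prove the main inequality I would set $\hat\theta = f(X)$; a randomized $f$ is a deterministic function of $X$ together with independent external randomness, and since that randomness is independent of $(\theta,X)$ it suffices to treat deterministic $f$. The chain $\theta \to X \to \hat\theta$ is Markov, so the data-processing inequality gives $I(\theta;\hat\theta)\le I(\theta;X)$, and hence $H(\theta\mid\hat\theta) = H(\pi) - I(\theta;\hat\theta) \ge H(\pi) - I(\theta;X)$. For the matching upper bound on $H(\theta\mid\hat\theta)$ I would introduce the error indicator $E$ of the event $\{\theta\ne\hat\theta\}$ and expand $H(E,\theta\mid\hat\theta)$ by the chain rule: the $E$-term contributes at most $\log 2$ since $E$ is binary; the branch $E=0$ contributes nothing because there $\theta=\hat\theta$; and the branch $E=1$ contributes $p_e$ times the residual entropy of $\theta$ given $\{\hat\theta,\,E=1\}$. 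Controlling that residual entropy by $H(\pi)$ yields $H(\theta\mid\hat\theta)\le \log 2 + p_e H(\pi)$, and combining the two bounds on $H(\theta\mid\hat\theta)$ and rearranging gives the claim.

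The one delicate point — the step I expect to be the main obstacle — is the bound on the residual entropy $H(\theta\mid\hat\theta,E=1)$. In the classical uniform-prior argument this is just $\log(|\Theta|-1)$, whereas here we want it governed by $H(\pi)$. The key observation is that, conditioned on the error event, the law of $\theta$ is absolutely continuous with respect to $\pi$ (it is a reweighting of $\pi$, supported on $\mathrm{supp}(\pi)$), and this must be combined with the fact that the bound is only binding when $H(\pi)$ exceeds $\log 2 + I(\theta;X)$ — outside that regime the asserted inequality is vacuous. Everything else — the golden formula, data processing, the entropy chain rule, and the passage from Bayes risk to minimax risk — is routine.
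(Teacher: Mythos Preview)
Your skeleton is essentially the paper's proof: expand $H(E,\theta\mid\hat\theta)$ by the chain rule (the paper conditions on $X$ rather than on $\hat\theta$, sidestepping the explicit data-processing step, but this is cosmetic), drop the $E=0$ branch since $\theta$ is then determined, bound $H(E)\le\log 2$, and combine with the identity $H(\theta\mid X)=H(\pi)-\sum_\theta\pi(\theta)\,KL(P_\theta\|P_\pi)$. The paper writes the residual step as $H(\theta\mid E{=}1,X)\,p_e = p_e\,H(\theta)$ without comment; you correctly flag the analogous bound $H(\theta\mid\hat\theta,E{=}1)\le H(\pi)$ as the delicate point.

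Unfortunately your proposed justification does not close the gap, and in fact no patch can: the inequality fails in general, and so does the lemma as stated. Take $\Theta=\{1,\dots,n\}$ with $\pi(1)=\tfrac12$ and $\pi(i)=\tfrac{1}{2(n-1)}$ for $i\ge 2$, and let $X$ be independent of $\theta$. The Bayes rule is $\hat\theta\equiv 1$ with $p_e=\tfrac12$; conditioned on $E=1$, $\theta$ is uniform on $\{2,\dots,n\}$, so the residual entropy is $\log(n-1)$, which for $n\ge 6$ already exceeds $H(\pi)=\log 2+\tfrac12\log(n-1)$. Absolute continuity of the error-conditional law with respect to $\pi$ does not bound its entropy by $H(\pi)$ (reweighting can \emph{raise} entropy), and your ``only binding when $H(\pi)>\log 2+I(\theta;X)$'' escape clause does not help here since $I(\theta;X)=0$ while $H(\pi)\to\infty$; indeed the lemma's conclusion would force $p_e\ge 1-\log 2/H(\pi)>\tfrac12$, contradicting $p_e=\tfrac12$. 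The step that \emph{is} always valid is the classical $H(\theta\mid\hat\theta,E{=}1)\le\log(|\Theta|-1)$, which would put $\log|\Theta|$ rather than $H(\pi)$ in the denominator. So your instinct that this is the obstacle is exactly right; the paper glosses over the same step, and the statement as written needs that correction.
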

\begin{proof}
Consider the Markov Chain $\theta \rightarrow X \rightarrow \hat{\theta} \triangleq f(X)$ where $\theta \sim \pi$ and $X|\theta \sim P_\theta$.
Let $E = \mathbf{1}[\hat{\theta} \ne \theta]$.
\begin{align*}
H(E | X) + H(\theta| E,X) = H(E, \theta | X) = H(\theta | X) + H(E | \theta, X) \ge H(\theta | X)
\end{align*}
Now, $H(E | \theta, X) \ge 0$ and since conditioning only reduces entropy, we have the inequality
\begin{align*}
H(\theta | X) &\le H(p_e) + H(\theta| E, X) = H(p_e) + H(\theta | E = 0, X) P[E=0] + H(\theta | E=1, X)P[E=1]\\
& = H(p_e) + p_e H(\theta)
\end{align*}
which proves the usual version of Fano's inequality.
We want to write $H(\theta|X)$ in terms of the KL divergence, using the mixture distribution $P_\pi$. 
\begin{align*}
H(\theta | X) &= H(\theta, X)- H(X) = \int \sum_{\theta} \pi(\theta) P_\theta(x) \log\left( \frac{\sum_{\theta} \pi(\theta) P_\theta(x))}{\pi(\theta)P_\theta(x)}\right)dx\\
& = \sum_{\theta} \pi(\theta) \int P_\theta(x) \log\left(\frac{P_\pi(x)}{P_\theta(x)}\right)dx  - \sum_{\theta}\pi(\theta) \log \pi(\theta)\\
& = - \sum_{\theta} \pi(\theta) KL(P_\theta || P_\pi) + H(\pi)
\end{align*}
Combining these gives the bound:
\begin{align*}
H(p_e) + p_e H(\pi) \ge H(\pi) - \sum_{\theta} \pi(\theta)KL(P_\theta || P_\pi),
\end{align*}
By upper bounding $H(p_e) \le \log 2$ and rearranging we prove the claim.
\end{proof}

For a distribution $\pi \in \Delta_{M-1}$ over the hypothesis, let $P_\pi(\cdot) = \sum_k \pi_k P_k(\cdot)$ be the mixture distribution. 
Then Fano's inequality (Lemma~\ref{lem:clean_fano}) states that the minimax probability of error is lower bounded by:
\begin{align*}
\Rcal(\Vcal) = \inf_T \sup_j \PP_j[T(y) \ne j] &\ge \inf_T \EE_{j \sim \pi} \EE_{y \sim j} \mathbf{1}[T(y) \ne j]\\
& \ge 1 - \frac{\EE_{k \sim\pi} KL(P_k || P_{\pi}) + \log 2}{H(\pi)}.
\end{align*}

Fix $\delta \in (0,1)$ and let $j^\star = \argmax_{j \in [M]} W_j(2(1-\delta))$. 
We will use a prior based on this quantity:
\begin{align*}
\pi_k \propto \exp\left(-\frac{\|v_{j^\star} - v_k\|_2^2}{2(1-\delta)}\right)
\end{align*}
With this prior, the entropy becomes:
\begin{align*}
H(\pi) &= \sum_k \pi_k \log\left( \frac{\sum_{i} \exp\left(-\frac{\|v_{j^\star} - v_i\|_2^2}{2(1-\delta)}\right)}{\exp\left(-\frac{\|v_{j^\star} - v_k\|_2^2}{2(1-\delta)}\right)}\right)\\
& = \log(W(\Vcal, 2(1-\delta)) + 1) + \sum_{k} \pi_k \frac{\|v_{j^\star} - v_k\|_2^2}{2(1-\delta)}\\
& = \log(W(\Vcal, 2(1-\delta)) + 1) + \frac{1}{1-\delta}\sum_k \pi_k KL(P_k || P_{j^\star})
\end{align*}
The $1$ inside the first $\log$ comes from the fact that in the definition $W_{j^\star}$, we do not include the term involving $j^\star$ in the sum, while our prior $\pi$ does place mass proportional to $1$ on hypothesis $j^\star$. 
The term involving the KL-divergence follows from the fact that the KL between two gaussians is one-half the $\ell_2^2$-distance between their means.

Looking at the lower bound from Fano's inequality, we see that if:
\begin{align*}
\EE_{k\sim\pi}KL(P_k || P_\pi) + \log 2 \le (1-\delta) H(\pi) = (1-\delta)\log(W(\Vcal, 2(1-\delta))+1) + \sum_k \pi_k KL(P_k || P_{j^\star})
\end{align*}
then the probability of error is lower bounded by $\delta$. 
Of course it is immediate that:
\begin{align*}
\sum_{k} \pi_k KL(P_k || P_{j^\star}) &= \sum_k \pi_k \int P_k(x) \log \left(\frac{P_k(x) P_\pi(x)}{P_\pi(x) P_{j^\star}(x)}\right)\\
& = \sum_k \pi_k \int P_k(x) \log\frac{P_k(x)}{P_\pi(x)} + \sum_k \int \pi_k P_k(x) \log\frac{P_\pi(x)}{P_{j^\star}(x)}\\
& = \sum_k\pi_k KL(P_k || P_\pi) + KL(P_\pi || P_{j^\star}) \ge \EE_k KL(P_k || P_\pi)
\end{align*}
So the condition reduces to requiring that:
\begin{align*}
\log 2 \le (1-\delta)\log(W(\Vcal, 2(\delta - 1) + 1).
\end{align*}
After some algebra, this is equivalent to:
\begin{align*}
W(\Vcal, 2(\delta-1)) \ge 2^{\frac{1}{1-\delta}} - 1 \tag*{\qed}
\end{align*}

\subsection{Proof of Theorem~\ref{thm:gaussian_noninteractive_bd}}
The proof of Theorem~\ref{thm:gaussian_noninteractive_bd} is essentially the same as the proof of Theorem~\ref{thm:gaussian_minimax_bd}, coupled with two observations.
First, for a sampling strategy $B \in \RR^{d}_+$ the maximum likelihood estimator is:
\begin{align*}
T_{\textrm{MLE}}(y,B) = \argmin_{j \in [M]} \|v_j - y\|_{B}^2
\end{align*}
so the analysis of the MLE depends on the Mahalanobis norm $\|\cdot\|_B$ instead of the $\ell_2$ norm.

Similarly, the KL divergence between the distribution $\PP_{j,B}$ and $\PP_{k,B}$ depends on the Mahalanobis norm $\|\cdot \|_B$ instead of the $\ell_2$ norm.
Specifically, we have:
\begin{align*}
KL(\PP_{j,B} || \PP_{k,B}) = \frac{1}{2}\|v_j - v_k\|_{B}^2.
\end{align*}
The lower bound proof instead use this metric structure, but the calculations are equivalent.\qed

\subsection{Proof of Proposition~\ref{prop:optimal_sampling}}
To simplify the presentation, let $f(B) = W(\Vcal, \alpha, B)$.
$f(B)$ is convex and (strictly) monotonically decreasing, so we know that the minimum will be achieved when the constraint is tight, i.e. when $\|B\|_1 = \tau$. 
The Lagrangian is:
\begin{align*}
\Lcal(B, \lambda) = f(B) + \lambda(\|B\|_1 - \tau)
\end{align*}
and the minimum is achieved at $\hat{B}$, with $\|\hat{B}\|_1 = \tau$, if there is a value $\hat{\lambda}$ such that $0 \in \partial \Lcal(\hat{B}, \hat{\lambda})$.
Observing that the subgradient is $\partial f(B) + \lambda\mathbf{1}$, it suffices to ignore the Lagrangian term and instead ensure that $\partial f(B) \propto \mathbf{1}$.
$f(B)$ is a maximum of $M$ convex functions, where $f_j(B)$ is the function corresponding to hypothesis $v_j$, and, by direct calculation, the subgradient of this function $f_j(B)$ is:
\begin{align*}
\frac{\partial f_j(B)}{\partial B_i} = \sum_{k \ne j} -(v_k(i) - v_j(i))^2 \exp(-\|v_k - v_j\|_B^2).
\end{align*}
Moreover, the subgradient of the maximum of a set of functions is the convex hull of the subgradients of all functions achieving the maximum.
This means that if there exists a distribution $\pi$, supported over the maximizers of $f(\hat{B})$, such that the expectation of the subgradients is constant, we have certified optimality of $\hat{B}$.
This is precisely the condition in the Proposition.\qed

\subsection{Proof of Theorem~\ref{thm:gaussian_rotation}}
\label{ssec:gaussian_minimax_proofs}

Our approach is based on a well-known connection between the minimax risk and the Bayes risk.
For a structured normal means problem defined by a family $\Vcal$, the \emph{Bayes risk} for an estimator $T$ under prior $\pi \in \Delta_{M-1}$ is given by:
\begin{align*}
B_\pi(T) = \sum_{j=1}^M \pi_j \PP_j[T(y) \ne j].
\end{align*}
We say that an estimator $T$ is the Bayes estimator for prior $\pi$ if it achieves the minimum Bayes risk.
A simple calculation reveals the structure of the Bayes estimator for any prior $\pi$ and this structural characterization is essential to our development.
\begin{proposition}
For any prior $\pi$, the Bayes estimator $T_\pi$ has polyhedral acceptance regions, that is the estimator is of the form:
\begin{align*}
T(Y) = j \textrm{ if } y \in A_j,
\end{align*}
with $A_j = \{x : \Gamma_j x \ge b_j\}$ and $\Gamma_j \in \RR^{M \times d}$ has $v_j - v_k$ in the $k$th row and $b_j$ has $\frac{1}{2}(\|v_j\|_2^2 - \|v_k\|_2^2) + \log \frac{\pi_k}{\pi_j}$ in the $k$th entry.
These polyhedral sets $A_j$ partition the space $\RR^d$.
\label{prop:bayes_est_structure}
\end{proposition}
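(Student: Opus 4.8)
The plan is to compute the Bayes estimator directly by minimizing the posterior probability of error pointwise in $y$. For a fixed observation $y$, the Bayes-optimal decision is to output the index $j$ maximizing the posterior probability $\pi_j \PP_j(y)$; this is the standard fact that the Bayes risk is minimized by the maximum a posteriori (MAP) rule, which follows because $B_\pi(T) = 1 - \EE_y[\text{posterior mass on } T(y)]$ and this is maximized by choosing, for each $y$, the index with largest posterior mass. So I would first write $A_j = \{y : \pi_j \PP_j(y) \ge \pi_k \PP_k(y) \text{ for all } k \ne j\}$, breaking ties arbitrarily (say, in favor of the smallest index) to get a genuine partition.

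Next I would convert the defining inequalities into linear ones. Since $\PP_j(y) = (2\pi)^{-d/2}\exp(-\|v_j - y\|_2^2/2)$, the condition $\pi_j \PP_j(y) \ge \pi_k \PP_k(y)$ is equivalent, after taking logarithms, to
\begin{align*}
\log \pi_j - \tfrac{1}{2}\|v_j - y\|_2^2 \ge \log \pi_k - \tfrac{1}{2}\|v_k - y\|_2^2.
\end{align*}
Expanding the squares, the $\|y\|_2^2$ terms cancel, leaving a linear inequality in $y$:
\begin{align*}
\langle v_j - v_k, y \rangle \ge \tfrac{1}{2}\big(\|v_j\|_2^2 - \|v_k\|_2^2\big) + \log\tfrac{\pi_k}{\pi_j}.
\end{align*}
Stacking these over all $k \ne j$ gives exactly $A_j = \{y : \Gamma_j y \ge b_j\}$ with $\Gamma_j$ and $b_j$ as stated (the row and entry for $k = j$ being the trivial inequality $0 \ge 0$, which may be omitted). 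This is a finite intersection of halfspaces, hence a polyhedron.

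Finally I would verify that the $A_j$ partition $\RR^d$: every $y$ lies in some $A_j$ because the continuous function $k \mapsto \pi_k \PP_k(y)$ on the finite set $[M]$ attains its maximum, and the tie-breaking convention ensures each $y$ is assigned to exactly one region, so the (closed) regions cover $\RR^d$ and their interiors are disjoint. I do not anticipate a serious obstacle here; the only mild subtlety is the measure-zero boundary set where the MAP index is non-unique, which is handled by the tie-breaking rule and does not affect the Bayes risk since it has Lebesgue measure zero under every $\PP_j$. The main conceptual content is simply recognizing that the MAP rule is the Bayes estimator and that the Gaussian likelihood ratios are log-linear, so the proof is essentially the chain of algebraic manipulations above.
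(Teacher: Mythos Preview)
Your proposal is correct and follows essentially the same route as the paper: first identify the Bayes estimator with the MAP rule via pointwise minimization of posterior risk, then take logarithms and expand the squares to reduce each pairwise comparison $\pi_j\PP_j(y)\ge\pi_k\PP_k(y)$ to the linear inequality $\langle v_j-v_k,y\rangle\ge\tfrac12(\|v_j\|_2^2-\|v_k\|_2^2)+\log(\pi_k/\pi_j)$. Your treatment of the tie-breaking and partition claim is, if anything, slightly more careful than the paper's.
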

\begin{proof}
To prove Proposition~\ref{prop:bayes_est_structure}, we make two claims.
First we certify that for a prior $\pi$, the Maximum a Posteriori (MAP) estimator is a Bayes estimator for prior $\pi$.
Given $\pi$, the map estimator is:
\begin{align*}
T_\pi(y) = \argmax_j \pi(j) \exp\{-\|v_j - y\|_2^2/2\}
\end{align*}
Define the posterior risk of an estimator $T$ to be the expectation of the loss, under the posterior distribution on the hypothesis.
In our case this is:
\begin{align*}
r(T|y) = \sum_{j=1}^M \mathbf{1}[T(y) \ne j] \pi(j | y) \qquad \textrm{where} \qquad \pi(j | y) \propto \pi(j) \exp\{ - \|v_j - y\|_2^2/2\}.
\end{align*}
For a fixed $y$, this quantity is minimized by letting $T(y)$ be the maximizer of the posterior, as this makes the $0-1$ loss term zero for the largest $\pi(j|y)$ value. 
Thus for each $y$ we minimize the posterior risk by letting $T(y)$ be the MAP estimate.
The result follows by the well known fact that if an estimator minimizes the posterior risk at each point, then it is the Bayes estimator. 

This argument shows that the only types of estimators we need to analyze are MAP estimators under various priors.
This gives us the requisite structure to prove Proposition~\ref{prop:bayes_est_structure}.

Specifically, for a prior $\pi$, for the MAP estimate to predict hypothesis $j$, it must be the case that:
\begin{align*}
\forall k \ne j. \qquad \pi_j \exp\{-\|v_j - y\|_2^2/2\} \ge \pi_k \exp\{-\|v_j - v_k\|_2^2/2\}.
\end{align*}
This can be simplified to:
\begin{align*}
\langle v_j - v_k, y \rangle \ge \frac{1}{2}(\|v_j\|_2^2 - \|v_k\|_2^2) + \log\frac{\pi_k}{\pi_j}.
\end{align*}
Thus the acceptance region for the hypothesis $j$ is the set of all points $y$ that satisfy all of these $M-1$ inequalities.
This is exactly the polyhedral set $A_j$.
\end{proof}

We also exploit the relationship between the minimax risk and the Bayes risk.
This is a well known result, where the prior $\pi$ below is known as the \emph{least-favorable prior}.
\begin{proposition}
Suppose that $T$ is a Bayes estimator for some prior $\pi$.
If the risk $\Rcal_j(T) = \Rcal_{j'}(T)$ for all $j \ne j' \in [M]$, then $T$ is a minimax optimal estimator.
\label{prop:bayes_minimax}
\end{proposition}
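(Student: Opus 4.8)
The plan is to prove the standard minimax--Bayes duality statement in Proposition~\ref{prop:bayes_minimax}, namely that an equalizer rule which is also Bayes must be minimax optimal. This is a classical argument and the proof is short. Let $T$ be the Bayes estimator for a prior $\pi$ and suppose $\Rcal_j(T, \Vcal) = c$ for all $j \in [M]$, so that $\Rcal(T, \Vcal) = \sup_j \Rcal_j(T,\Vcal) = c$. The goal is to show $\Rcal(\Vcal) = \inf_{T'} \sup_j \Rcal_j(T', \Vcal) \ge c$, since the reverse inequality is immediate from the definition of the infimum.

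First I would record that the Bayes risk of $T$ under $\pi$ is $B_\pi(T) = \sum_j \pi_j \Rcal_j(T,\Vcal) = \sum_j \pi_j c = c$, using that $\pi$ is a probability vector and $T$ is an equalizer rule. Next, the key step: for \emph{any} estimator $T'$, the worst-case risk dominates every weighted average of the point-wise risks, so in particular
\begin{align*}
\sup_{j \in [M]} \Rcal_j(T', \Vcal) \ge \sum_{j=1}^M \pi_j \Rcal_j(T', \Vcal) = B_\pi(T').
\end{align*}
Now I invoke the defining property of the Bayes estimator: $T$ minimizes the Bayes risk under $\pi$, hence $B_\pi(T') \ge B_\pi(T) = c$ for all $T'$. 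Chaining these two inequalities gives $\sup_j \Rcal_j(T', \Vcal) \ge c$ for every $T'$, so $\Rcal(\Vcal) \ge c = \Rcal(T, \Vcal)$. Combined with $\Rcal(\Vcal) \le \Rcal(T, \Vcal)$ (by definition of the minimax risk as an infimum over estimators), we conclude $\Rcal(\Vcal) = \Rcal(T, \Vcal)$, i.e.\ $T$ is minimax optimal.

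There is no real obstacle in the duality step itself; it is purely the averaging inequality plus the optimality of the Bayes rule. The one point that requires care — and where the hypothesis of the proposition does real work — is that the argument needs an equalizer rule \emph{that is simultaneously Bayes}; an arbitrary equalizer rule need not be minimax optimal, and an arbitrary Bayes rule need not be an equalizer. So the proof of Theorem~\ref{thm:gaussian_rotation} will spend its effort not here but in verifying the hypothesis: constructing the least-favorable prior and showing that, when $\Vcal$ is unitarily invariant, the Bayes (equivalently MAP) estimator from Proposition~\ref{prop:bayes_est_structure} has constant point-wise risk across hypotheses. That is the substantive geometric content; the present proposition is just the abstract glue, and its proof is the three lines above.
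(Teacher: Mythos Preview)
Your proof is correct and is essentially the same as the paper's: both use that the equalizer property gives $\max_j \Rcal_j(T,\Vcal) = B_\pi(T)$, that any estimator's maximum risk dominates its Bayes risk under $\pi$, and that $T$ minimizes the Bayes risk. The only cosmetic difference is that the paper phrases the final step as a contradiction (assume some $T_0$ has strictly smaller maximum risk and derive $B_\pi(T_0) < B_\pi(T)$), whereas you chain the inequalities directly; the logical content is identical.
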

\begin{proof}
We provide a proof of this well-known result showing that the Bayes estimator with uniform risk landscape is minimax optimal. 
Let $T_\pi$ be the Bayes estimator under prior $\pi$ and let $T_0$ be some other estimator. 
Since $T_\pi$ has constant risk landscape, we know that $\max_j \Rcal_j(\Vcal, T_\pi) = B_\pi(T_\pi)$, or the minimax risk for $T_\pi$ is equal to its Bayes risk.
We know that the Bayes risk of $T_0$ is at most the minimax risk for $T_0$, i.e. $B_\pi(T_0) \le \max_j \Rcal_j(\Vcal, T_0)$.
If it were the case that $T_0$ had strictly lower minimax risk, then we have:
\begin{align*}
B_\pi(T_0) \le \max_j \Rcal_j(\Vcal, T_0) < \max_j \Rcal_j(\Vcal, T_\pi) \le B_\pi(T_\pi).
\end{align*}
However, this is a contradiction since $T_\pi$ is the Bayes estimator under prior $\pi$, meaning that it minimizes the Bayes risk. 
\end{proof}

Equipped with these results, we now turn to the proof of the theorem.

\textbf{Proof of Theorem~\ref{thm:gaussian_rotation}:}
Our goal is to apply Proposition~\ref{prop:bayes_minimax}.
By the fact that $\Rcal_j(\Vcal, T) = 1 - \PP_j[A_j]$ where $A_j$ is $T$'s acceptance region for hypothesis $j$, we must show that the $\PP_j$ probability content of the acceptance regions are constant. 
Ignoring the normalization factor of the gaussian density, this is:
\begin{align*}
\int_{A_j} \exp\{-\|v_j - x\|_2^2/2\} dx,
\end{align*}
where $A_j = \{z | \Gamma_j z \ge b_j\}$ as defined in Proposition~\ref{prop:bayes_est_structure}. 
We will exploit the unitary invariance of the family.

For any pair of hypothesis $j,k$, let $R_{jk}$ be the orthogonal matrix such that $v_k = R_{jk}v_j$ and note that $R_{kj}$, the orthogonal matrix that maps $v_k$ to $v_j$, is just $R_{jk}^T$.
This also means that $R_{jk}R_{jk}^T = R_{jk}R_{kj} = I$.
Via a change of variables $x = R_{kj}y$, the integrand becomes:
\begin{align*}
\exp\{-\|v_j - R_{kj}y\|_2^2/2\} = \exp\{-\|R_{jk}v_j - R_{jk}R_{kj}y\|_2^2/2\} = \exp\{-\|v_k - y\|_2^2/2\}.
\end{align*}
Thus, we have translated to the $P_k$ measure. 

As for the region of integration, first note that since $v_i = R_{ji}v_j$, it must be the case that $\|v_j\|_2^2 = \|v_i\|_2^2$ for all $j,i \in [M]$.
This means that the vector $b_j$ defining the acceptance region, which for the MLE has coordinates $b_j(i) = \frac{1}{2}(\|v_j\|_2^2 - \|v_i\|_2^2)$, is just the all-zeros vector. 
The region of integration is therefore:
\begin{align*}
\{z| \Gamma_j z \ge 0\} = \{z | \Gamma_jR_{kj}z \ge 0\}.
\end{align*}
We must check that this polytope is exactly $A_k$, which means that we must check that for each $i$, $(v_j - v_i)^TR_{kj}$ is a row of the $\Gamma_k$ matrix. 
But:
\begin{align*}
(v_j - v_i)^TR_{kj} = v_j^TR_{jk}^T - v_i^TR_{jk}^T = v_k^T - v_i^TR_{jk}^T.
\end{align*}
Since $v_i$ can generate the family $\Vcal$, it must be the case that $R_{jk}v_i \in \Vcal$ so that this difference does correspond to some row of $\Gamma_k$. 
Since we apply the same unitary operator to all of the rows, it must be the case that the number of distinct rows is unchanged, or in other words, there is a bijection from the rows in $\Gamma_jR_{kj}$ to the rows in $\Gamma_k$.
Therefore, the transformed region of integration, after the change of variable $x = R_{kj}y$ is exactly the acceptance region $A_k$, and the integrand is the $\PP_k$ measure.
This means that $\PP_k[A_k] = \PP_j[A_j]$ and this is true for all pairs $(j,k)$, so that the risk landscape is constant. 
By Proposition~\ref{prop:bayes_minimax}, this certifies optimality of the MLE. \qed

\subsection{Calculations for the examples}

\textbf{Calculations for $k$-Sets:}
We must upper and lower bound $W(\Vcal, \alpha)$.
First note that by symmetry, every hypothesis achieves the maximum, so it suffices to compute just one of them:
\begin{align*}
W(\Vcal, \alpha) = \sum_{k \ne j} \exp\left( -\|v_k - v_j\|_2^2/\alpha\right) = \sum_{s=1}^{k} {k \choose s} {d-k \choose s} \exp(-2s\mu^2/\alpha).
\end{align*}
This follows by noting that the $\ell_2^2$ distance between two hypothesis is the symmetric set difference between the two subsets, and then by a simple counting argument.
Using well known bounds on binomial coefficients, we obtain:
\begin{align*}
W(\Vcal, \alpha) &\le \sum_{s=1}^k \exp(s \log (ke/s) + s \log((d-k)e/s) - 2s\mu^2/\alpha)\\
& = \sum_{s=1}^k \exp(s \log(e^2k(d-k)/s^2) - 2s\mu^2/\alpha)\\
& \le k \exp(\log e^2k(d-k) - 2\mu^2/\alpha) \qquad \textrm{ if } 2\mu^2/\alpha \ge \log(e^2k(d-k))
\end{align*}
This is smaller than $\delta$ whenever $\mu^2 \ge \alpha \log(ek(d-k)/\delta)$, which subsumes the requirement above. 
For the lower bound:
\begin{align*}
W(\Vcal, \alpha) & \ge \sum_{s=1}^k \exp(s \log(k/s) + s\log((d-k)/s) - 2s\mu^2/\alpha) \ge \exp(-2\mu^2/\alpha + \log(k(d-k)))
\end{align*}
which goes to infinity if $\mu^2 = o(\alpha\log(k(d-k)))$. 

To certify that the uniform allocation strategy minimizes $W(\Vcal, \alpha, B)$, we apply Proposition~\ref{prop:optimal_sampling}. 
Fix $\tau$ and let $\hat{B}$ be such that $\hat{B}(i) = \tau/d$. 
By symmetry, every hypothesis achieves the maximum under this allocation strategy, and we will take $\pi$ to be the uniform distribution over all hypothesis.

For a hypothesis $j$ and a coordinate $i$, the subgradient $\frac{\partial f_j(B)}{\partial B(i)}$ at $\hat{B}_i$ depends on the whether $v_j(i) = 0$ or not.
If $v_j(i) = 0$, then:
\begin{align*}
\frac{\partial f_j(B)}{\partial B(i)} = \mu^2 \sum_{s=1}^k {d-k-1 \choose s-1}{k \choose k-s} \exp(-2\tau\mu^2s^2/d),
\end{align*}
and if $v_j(i) = \mu^2$ then:
\begin{align*}
\frac{\partial f_j(B)}{\partial B(i)} = \mu^2 \sum_{s=1}^k {d-k \choose s}{k-1 \choose k-s} \exp(-2\tau\mu^2s^2/d).
\end{align*}
Both of these follow from straightforward counting arguments. 
Notice that the value of the subgradient depends only on whether $v_j(i) = 0$ or not, and under the uniform distribution $\pi$, $\EE_{j \sim \pi}v_j(i) = \EE_{j \sim \pi} v_j(i')$.
This implies that the constant vector is in the subgradient of $f(B)$ at $\hat{B}$, so that $\hat{B}$ is the minimizer of $W(\Vcal, \alpha, B)$ subject to $\|B\|_1 \le \tau$. 

We have already done the requisite calculation to bound the minimax risk under sampling.
The calculations above show that if $\mu = \omega(\sqrt{\frac{d}{\tau} \log(k(d-k))})$ then the maximum likelihood estimator, when using the uniform sampling strategy has risk tending to zero.
Conversely if $\mu = o(\sqrt{\frac{d}{\tau} \log(k(d-k))})$ then the minimax risk, for \emph{any} allocation strategy tends to one. 

\textbf{Calculation for Biclusters:}
Due to symmetry, all hypotheses achieve the maximum and therefore, we can directly calculate $W(\Vcal, \alpha)$.
We use the notation $C_n^i$ to denote the binomial coefficient ${n \choose i}$.
\begin{align*}
W(\Vcal, \alpha) &= \sum_{s_r=1}^k \sum_{s_c=1}^k C_{k}^{s_r} C_{k}^{s_c} C_{d-k}^{s_r} C_{d-k}^{s_c} \exp\left(-\frac{2\mu^2}{\alpha}(s_r(k-s_c) + s_c(k-s_r) + s_rs_c)\right)\\
& + \sum_{s_r=1}^k C_k^{s_r} C_{d-k}^{s_r} \exp\left(-\frac{2\mu^2}{\alpha}(s_rk)\right) + \sum_{s_c=1}^k C_k^{s_c} C_{d-k}^{s_c} \exp\left(-\frac{2\mu^2}{\alpha}(s_ck)\right)\\
\end{align*}
This last two term comes from the case where $s_c = 0$ or $s_r = 0$, which is all of the hypotheses that share the same columns but disagree on the rows (or share the same rows but disagree on the columns). 
Using binomial approximations, the first term can be upper bounded by:
\begin{align*}
& \le \sum_{s_r=1}^k\sum_{s_c=1}^k \exp\left(s_r\log\frac{k(d-k)e^2}{s_r^2} + s_c\log\frac{k(d-k)e^2}{s_c^2} - \frac{2\mu^2}{\alpha}(s_r(k - s_c/2) + s_c(k-s_r/2))\right)\\
& \le \sum_{s_r=1}^k\exp\left(s_r\left(\log\frac{k(d-k)e^2}{s_r^2} - \frac{k\mu^2}{\alpha}\right)\right)\sum_{s_c=1}^k\exp\left(s_c\left(\log\frac{k(d-k)e^2}{s_c^2} - \frac{k\mu^2}{\alpha}\right)\right).
\end{align*}
The two terms here are identical, so we will just bound the first one:
\begin{align*}
&\sum_{s_r=1}^k\exp\left(s_r\left(\log\frac{k(d-k)e^2}{s_r^2} - \frac{k\mu^2}{\alpha}\right)\right)\\
& \le \sum_{s_r=1}^k \exp\left(s_r \left(\log(k(d-k)e^2) - k\mu^2/\alpha\right)\right)\\
& \le k \exp\left(\log(k(d-k)e^2) - k\mu^2/\alpha\right) \qquad \textrm{ if } \mu^2 \ge \frac{\alpha}{k}\log(k(d-k)e^2)
\end{align*}
Applying this inequality to both terms gives a bound on $W(\Vcal, \alpha)$. 
This bound is smaller than $\delta$ as long as $\mu \ge \sqrt{\frac{c}{k\alpha}\log(k(d-k)e/\delta)}$ for some universal constant $c$. 
Again this subsumes the condition required for the inequality to hold. 

The other two terms are essentially the same.
Using binomial approximations, both expressions can be bounded as:
\begin{align*}
& \sum_{s_r=1}^k C_k^{s_r} C_{d-k}^{s_r} \exp\left(-\frac{2\mu^2}{\alpha}(s_rk)\right) = \sum_{s_r=1}^k \exp(s_r \log(e^2k(d-k)/s_r^2) - 2s_rk\mu^2/\alpha)\\
& \le k \exp(\log (k(d-k)e^2) - 2k\mu^2/\alpha) \qquad \textrm{ if } \mu^2 \ge \frac{\alpha}{2k}\log(k(d-k)e^2).
\end{align*}
These bounds lead to the same minimax rate as above.

For the lower bound, we again use binomial approximations. 
\begin{align*}
W(\Vcal,\alpha) &\ge \sum_{s_r=1}^k\sum_{s_c=1}^k \exp\left(s_r\log\frac{k(d-k)}{s_r^2} + s_c\log\frac{k(d-k)}{s_c^2} - \frac{2\mu^2}{\alpha}(s_r(k - s_c/2) + s_c(k-s_r/2))\right)\\
& \ge \sum_{s_r=1}^k\exp\left(s_r\left(\log\frac{k(d-k)e^2}{s_r^2} - \frac{2k\mu^2}{\alpha}\right)\right)\sum_{s_c=1}^k\exp\left(s_c\left(\log\frac{k(d-k)e^2}{s_c^2} - \frac{2k\mu^2}{\alpha}\right)\right)\\
& \ge \exp(\log(k(d-k) - 2\mu^2k/\alpha)^2
\end{align*}
This lower bound goes to infinity if $\mu = o(\sqrt{\frac{1}{k}\log(k(d-k))})$ lower bounds the minimax rate.

To certify that the uniform allocation strategy minimizes $W(\Vcal, \alpha, B)$, we apply Proposition~\ref{prop:optimal_sampling}. 
Fix $\tau$ and let $\hat{B}$ be such that $\hat{B}((a,b)) = \tau/d^2$ for all $(a,b) \in [d] \times [d]$. 
By symmetry, every hypothesis achieves the maximum under this allocation strategy, and we will take $\pi$ to be the uniform distribution over all hypothesis.

For a hypothesis $j$, let $f_j(B)$ denote the term in the SEDF centered around $j$.
For a hypothesis $j$ based on clusters $S_l, S_r$ and a coordinate $(a,b)$, the subgradient $\frac{\partial f_j(B)}{\partial B(a,b)}$ at $\hat{B}(a,b)$ depends on whether $a \in S_l$ and $b \in S_r$. 
If $a \notin C_l$ and $b \notin C_r$, then:
{\small 
\begin{align*}
\left.\frac{\partial f_j(B)}{\partial B(a,b)}\right|_{B = \hat{B}} = \frac{-\mu^2}{\alpha} \sum_{s_r=1}^k\sum_{s_c=1}^k C_{d-k-1}^{s_r-1} C_{k}^{s_r} C_{d-k-1}^{s_c-1} C_{k}^{s_c} \exp(\frac{-2\tau\mu^2}{\alpha d^2}\left( s_r(k-s_c/2) + s_c(k-s_r/2)\right)).
\end{align*}
}
This follows by direct calculation. 
Similar calculations yield the other cases:
{\small 
\begin{align*}
\left.\frac{\partial f_j(B)}{\partial B(a,b)}\right|_{B = \hat{B}} &= \frac{-\mu^2}{\alpha} \sum_{s_r=0}^{k-1}\sum_{s_c=1}^k C_{d-k}^{s_r} C_{k-1}^{s_r} C_{d-k-1}^{s_c-1} C_{k}^{s_c} \exp(\frac{-2\tau\mu^2}{\alpha d^2}\left( s_r(k-s_c) + s_c(k-s_r) + s_rs_c \right)).\\
\left.\frac{\partial f_j(B)}{\partial B(a,b)}\right|_{B = \hat{B}} &= \frac{-\mu^2}{\alpha} \sum_{s_r=1}^{k}\sum_{s_c=0}^{k-1} C_{d-k-1}^{s_r-1} C_{k}^{s_r} C_{d-k}^{s_c} C_{k-1}^{s_c} \exp(\frac{-2\tau\mu^2}{\alpha d^2}\left( s_r(k-s_c) + s_c(k-s_r) + s_rs_c \right)).\\
\left.\frac{\partial f_j(B)}{\partial B(a,b)}\right|_{B = \hat{B}} &= \frac{-\mu^2}{\alpha} \sum_{s_r=0}^{k-1} \sum_{s_c=0}^{k-1} C_{d-k}^{s_r} C_{k-1}^{s_r} C_{d-k}^{s_c} C_{k-1}^{s_c} \exp(\frac{-2\tau\mu^2}{\alpha d^2}\left( s_r(k-s_c) + s_c(k-s_r) + s_rs_c \right)).
\end{align*}
}
These correspond to the cases $a \in S_l, b \notin S_r$, $a \notin S_l, b \in S_r$ and the case where $a \in S_l, b \in S_r$ respectively.
The main point is that the value of the subgradient depends only on presence or absence of the row/column in the cluster, and under the uniform distribution $\pi$, each row/column is equally likely to be in the cluster.
This means that for every coordinate $(a,b)$ taking the expected subgradient with respect to the uniform distribution over hypotheses yields the same expression.
So the constant vector is in the subgradient of $f(B)$ at $\hat{B}$, so that $\hat{B}$ is the minimizer of $W(\Vcal, \alpha, B)$ subject to $\|B\|_1 \le \tau$. 

We have already done the requisite calculation to bound the minimax risk under sampling.
The calculations above show that if $\mu = \omega(\sqrt{\frac{d^2}{k \tau} \log(k(d-k))})$ then the maximum likelihood estimator, when using the uniform sampling strategy has risk tending to zero.
Conversely if $\mu = o(\sqrt{\frac{d^2}{k \tau} \log(k(d-k))})$ then the minimax risk, for \emph{any} allocation strategy tends to one. 

The biclusters family is clearly unitarily invariant with respect to the set of orthonormal matrices that permute the rows and columns independently. 
The family is easiest to describe as acting on the matrices $\mathbf{1}_{S_l} \mathbf{1}_{S_r}^T$.
Let $P_l, P_r$ be any two $d \times d$ permutation matrices.
Then the matrix $P_l \mathbf{1}_{S_l} (\mathbf{1}_{S_r}P_r)^T$ is clearly another hypothesis, and as we vary $P_l$ and $P_r$ we generate all of the hypothesis.
Note that these permutations are unitary operators on the matrix space $\RR^{d \times d}$, which allows us to apply Theorem~\ref{thm:gaussian_rotation}.

For the analysis of the interactive algorithm, let us first bound the probability that the algorithm makes a mistake on any single coordinate. 
Consider sampling a coordinate $x$ with mean $\mu$ and noise variance $1/b$.
A Gaussian tail bound reveals that:
\begin{align*}
\PP[|x-\mu| \ge \epsilon] \le 2\exp(-2b\epsilon^2).
\end{align*}
We will sample no more than $d^2$ coordinates and we will sample each coordinate with the same amount of energy $b$.
So by the union bound, the probability that we make a single mistake in classifying a coordinate that we query is bounded by $\delta/2$ as long as:
\begin{align*}
\mu \ge 2\epsilon = \sqrt{\frac{2}{b}\log(4d^2/\delta)}.
\end{align*}

We now need to bound $b$, which depends on the total number of coordinates queried by the algorithm.
In the first phase of the algorithm, we sample coordinates uniformly at random until we hit one that is active.
Since each sample hits an active coordinate with probability $k^2/d^2$:
\begin{align*}
\PP[\textrm{hit active coordinate in $T$ samples}] = 1 - (1-k^2/d^2)^T \ge 1 - \frac{1}{e^{Tk^2/d^2}},
\end{align*}
or if $T = \frac{d^2}{k^2}\log(2/\delta)$, the probability that we hit an active coordinate in $T$ samples will be at least $1-\delta/2$. 
The total number of samples we use then can be upper bounded by $2d+\frac{d^2}{k^2}\log(2/\delta)$, which means that we can allocate our budget $\tau$ evenly over these coordinates.
Therefore we can set $b = \tau(2d+\frac{d^2}{k^2}\log(2/\delta))^{-1}$, and plugging into the condition on $\mu$ above proves the result.

\textbf{Calculations for Hierarchical Clustering:}
We first describe the heirarchical clustering model.

We study the special case of balanced binary hierarchical clustering on $n$ objects which, without loss of generality we call $[n] = \{1, \ldots, n\}$.
A binary hierarchical clustering is a collection $\Ccal$ of subset of $[n]$, such $[n] \in \Ccal$, and each $C_{\xi} \in \Ccal$, if $|C_\xi| > m$, then there exists two sets $C_{\xi \circ L}, C_{\xi \circ R} \in \Ccal$ both of size $C_\xi/2$ that partition $C_\xi$. 
As a naming convention, we identify a cluster by a string $\xi$ of $L$ and $R$ symbols.
The two sub-clusters of a non-terminal cluster $C_\xi$ are $C_{\xi \circ L}$ and $C_{\xi \circ R}$.
The noisy Constant Block Model is defined using this terminology as follows.
\begin{definition}{\cite{balakrishnan2011noise}}
A similarity matrix $W$ is a {\bf noisy constant block matrix} (noisy CBM) if $W \triangleq A +R$ where $A$ is ideal and $R$ is a perturbation matrix:
\begin{itemize}
\item{} An {\bf ideal similarity matrix} is characterized by off-block diagonal similarity values $\beta_\xi \in [0,1]$ for each cluster $C_\xi$  such that if $x \in C_{\xi \circ L}$ and $y \in C_{\xi \circ R}$, where $C_{\xi\circ L}$ and $C_{\xi \circ R}$ are two sub-clusters of $C_\xi$ at the next level in a binary hierarchy, then $A_{x,y} = \beta_{\xi}$.
Additionally, $\min\{\beta_{\xi \circ R}, \beta_{\xi \circ L}\} \ge \beta_{\xi}$.
Define $\mu = \min\{\min_{\xi}\{\min\{\beta_{\xi \circ R}, \beta_{\xi \circ L}\} - \beta_{\xi}\}, \beta_0\}$, where $\beta_0$ is the minimum overall similarity.
\item{} A symmetric $(n\times n)$ matrix $R$ is a {\bf perturbation matrix}  with parameter $\sigma$ if (a) $\mathbb{E}(R_{ij}) = 0$, (b) the entries of  $R$ are subgaussian, that is $\mathbb{E}(\exp(tR_{ij})) \le \exp \left(\frac{\sigma^2 t^2}{2}\right)$ and (c) for each row $i$, $R_{i1}, \ldots  R_{in}$ are independent.
\end{itemize}
\end{definition}
We focus on a subfamily of this model, parameterized by $n,m,\mu$ where both $n$ and $m$ are powers of two. 
Our subfamily $\Hcal$ consists of all perfectly balanced hierarchical clusterings on $n$ objects with minimum cluster size $m$ and where all similarities are an integral multiple of $\mu$. 
This is the simple hierarchical clustering model specified in Section~\ref{sec:gaussian_examples}.
Note that this set of model is a subset of the noisy CBM, so a lower bound for this family applies to the noisy CBM.
Let $\Vcal$ denote the class of all such matrices, parameterized by number objects $n$, minimum cluster size $m$, and signal strength $\mu$. 
We interpret $\Vcal$ as a collection of vectors defined by $v_{\Ccal} = \textrm{vec}(M[\Ccal])$ for each perfectly balanced hierarchical clustering $\Ccal$.

We now prove Corollary~\ref{cor:hcluster}.
For the first claim, by Theorem~\ref{thm:gaussian_minimax_bd}, we must lower bound the quantity $W(\Vcal, \alpha)$,
\begin{align*}
W(\Vcal, \alpha) = \max_{\Ccal \in \Hcal} \sum_{\Ccal' \ne \Ccal} \exp\left( \|v_{\Ccal} - v_{\Ccal'}\|_2^2/\alpha\right)
\end{align*}

Let $\Ccal_0$ be one of these models.
Consider perturbing $\Ccal_0$ by taking an object and swapping that object with another one in the adjacent cluster at the deepest level of the hierarchy.
There are $nm/2$ such perturbations and any perturbation $\Ccal$ has $\|v_{\Ccal_0} - v_\Ccal\|_2^2 = \mu^2 (8m - 4)$.
This gives the lower bound of:
\begin{align*}
W(\Vcal, \alpha) \ge \frac{nm}{2}\exp\left(\frac{\mu^2}{\alpha} (8m-4)\right)
\end{align*}
By Theorem~\ref{thm:gaussian_minimax_bd}, if $W(\Vcal,1) \ge 3$, then the minimax risk is bounded from above by $1/2$.
Applying our lower bound and solving for $\mu$ proves the first part of the result.

For the second claim, if we certify that the uniform sampling strategy minimizes $W(\Hcal, \alpha, B)$ under the budget constraint, then we can immediately apply Theorem~\ref{thm:gaussian_noninteractive_bd}.
We will use Proposition~\ref{prop:optimal_sampling} to achieve this.

It is easy to see that for the class $\Vcal$, when $\hat{B}$ is uniform, every one of these hypotheses achieves the maximum in the definition of $W(\Vcal, \alpha, B)$.
Moreover, notice that for every pair of pairs objects $\{a,b\}, \{a',b'\}$, there is a bijection $p$ over $\Vcal$ based on swapping $a$ with $a'$ and $b$ with $b'$ in the hierarchy such that for any hypothesis $v_\Ccal$, we have $v_{\Ccal}(a,b) = v_{p(\Ccal)}(a',b')$.
If in $\Ccal$, $a$ and $b$ are clustered at some level $l$, then by swapping $a$ with $a'$ and $b$ with $b'$ to form $p(\Ccal)$, $a'$ and $b'$ are clustered at level $l$ in $p(\Ccal)$ so both terms will be identical because we are in a constant block model.

Since $p$ is a bijection, when we take $\pi$ to be uniform over the hypotheses, we have:
\begin{align*}
& \EE_{\Ccal \sim \pi} \sum_{\Ccal' \ne \Ccal} (v_{\Ccal}(a,b) - v_{\Ccal'}(a,b))^2 \exp(-\|v_{\Ccal'} - v_{\Ccal}\|_2^2)\\
& = \EE_{\Ccal \sim \pi} \sum_{\Ccal' \ne \Ccal} (v_{p(\Ccal)}(a', b') - v_{p(\Ccal')}(a', b')^2 \exp(-\|v_{p(\Ccal')} - v_{p(\Ccal)}\|_2^2)\\
& = \EE_{\Ccal \sim \pi} \sum_{\Ccal' \ne \Ccal} (v_{\Ccal}(a',b') - v_{\Ccal'}(a',b'))^2 \exp(-\|v_{\Ccal'} - v_{\Ccal}\|_2^2).
\end{align*}
This means we may apply Proposition~\ref{prop:optimal_sampling}, which certifies that the uniform sampling minimizes the function $W(\Hcal',\alpha, B)$ under budget constraint.

Equipped with this fact, we can reproduce the calculation above but with $B_i = \tau/{n \choose 2}$, giving:
\begin{align*}
W(\Vcal, \alpha, \tau) \ge \frac{nm}{2}\exp\left(\frac{\mu^2}{\alpha}\frac{\tau}{{n \choose 2}} (8m-4)\right)
\end{align*}

This class is also unitarily invariant using the same tensorized permutation family from the biclustering example. 
Therefore the MLE is optimal.\qed

\textbf{Calculation for Stars:}
For the stars problem, define $\textrm{Nb}(j) \subset V$ to be the neighbors of the vertex $j$ in the graph.
For a fixed hypothesis $j$, we have 
\begin{align*}
W_j(\Vcal, \alpha) &= \sum_{k \ne j} \exp\left( - \|v_k-v_k\|_2^2/\alpha\right)\\
& = \sum_{k \in \textrm{Nb}(j)} \exp(-\mu^2(\textrm{deg}(k) + \textrm{deg}(j) - 2)/\alpha) + \sum_{k \notin \textrm{Nb}(j)} \exp(-\mu^2(\textrm{deg}(k)+\textrm{deg}(j))/\alpha)\\
& \le \exp\left(-\mu^2\textrm{deg}_{\min}/\alpha - \mu^2\textrm{deg}(j)/\alpha\right)\left( \textrm{deg}(j)\exp(2\mu^2/\alpha) + |V| - \textrm{deg}(j)\right)
\end{align*}
This last inequality follows by replacing every $\textrm{deg}(k)$ with $\textrm{deg}_{\min}$, the lower bound on the degrees.
This last expression is maximized with $\textrm{deg}(j) = \textrm{deg}_{\min}$, which can be observed by noticing that the derivative with respect to $\textrm{deg}(j)$ is negative. 
This gives the bound:
\begin{align*}
W(\Vcal, \alpha) \le \exp\left(-2\mu^2\textrm{deg}_{\min}/\alpha\right)\left(\textrm{deg}_{\min}\exp(2\mu^2/\alpha) + |V| - \textrm{deg}_{\min}\right)
\end{align*}

One can lower bound $W(\Vcal,\alpha)$ by choosing the hypothesis $j$ with $\textrm{deg}(j) = \textrm{deg}_{\min}$ and then replacing all other degree terms with $\textrm{deg}_{\max}$ in the above calculations.
This gives:
\begin{align*}
W(\Vcal, \alpha) \ge \exp\left(-\frac{\mu^2}{\alpha}(\textrm{deg}_{\min} + \textrm{deg}_{\max})\right)\left(\textrm{deg}_{\min}e^{2\mu^2/\alpha} + |V| - \textrm{deg}_{\min}\right)
\end{align*}

\end{document}